\documentclass{article}

\usepackage{microtype}
\usepackage{multirow}
\usepackage{graphicx}
\usepackage{subfigure}
\usepackage{booktabs} 
\usepackage{caption}
\usepackage{wrapfig}
\usepackage{xcolor}  
\usepackage{hyperref}

\usepackage[accepted]{icml2026} 

\usepackage{amsmath}
\usepackage{amssymb}
\usepackage{mathtools}
\usepackage{algorithm}
\usepackage{algorithmic}
\usepackage{amsmath} 
\usepackage{enumitem}
\usepackage{bm}

\DeclareMathOperator*{\argmax}{arg\,max}

\DeclareMathOperator{\MAP}{MAP}

\DeclareMathOperator{\Cov}{Cov}

\DeclareMathOperator{\vecc}{vec}

\def\Dcal{{\mathcal{D}}}

\def\Lcal{{\mathcal{L}}}

\def\Ncal{{\mathcal{N}}}
\def\Ocal{{\mathcal{O}}}

\def\Ycal{{\mathcal{Y}}}

\def\Ebb{{\mathbb{E}}}

\def\Rbb{{\mathbb{R}}}

\def\pbm{{\bm{p}}}

\def\xbm{{\bm{x}}}

\def\zbm{{\bm{z}}}
\def\alphabm{{\bm{\alpha}}}
\def\pibm{{\bm{\pi}}}
\def\mubm{{\bm{\mu}}}
\def\Sigmabm{{\bm{\Sigma}}}

\def\b0{{\mathbf{0}}}

\usepackage[capitalize,noabbrev]{cleveref}

\usepackage{amsthm}

\theoremstyle{plain}
\newtheorem{theorem}{Theorem}[section]
\newtheorem{proposition}[theorem]{Proposition}

\theoremstyle{definition}

\theoremstyle{remark}

\usepackage{dsfont}
\newcommand{\softmax}{\operatorname{softmax}}
\newcommand{\KL}{\mathrm{KL}}

\usepackage[textsize=tiny]{todonotes}

\begin{document}

\twocolumn[
\icmltitle{Laplace-Bridged Randomized Smoothing for Fast Certified Robustness}








\begin{icmlauthorlist}
\icmlauthor{Miao Lin}{oducs}
\icmlauthor{MD Saifur Rahman Mazumder}{utepmath}
\icmlauthor{Feng Yu}{utepmath}
\icmlauthor{Daniel Takabi}{oducyber}
\icmlauthor{Rui Ning}{oducs}
\end{icmlauthorlist}

\icmlaffiliation{oducs}{Department of Computer Science, Old Dominion University, Norfolk, VA, USA}
\icmlaffiliation{utepmath}{Department of Mathematical Sciences, University of Texas at El Paso, El Paso, TX, USA}
\icmlaffiliation{oducyber}{School of Cybersecurity, Old Dominion University, Norfolk, VA, USA}

\vspace{0.08in}
\begin{center}
{\small
$^{1}$ Department of Computer Science, Old Dominion University, Norfolk, VA, USA\\
$^{2}$ Department of Mathematical Sciences, University of Texas at El Paso, El Paso, TX, USA\\
$^{3}$ School of Cybersecurity, Old Dominion University, Norfolk, VA, USA
}
\end{center}

\icmlkeywords{Machine Learning, ICML}

\vskip 0.3in
]

\begin{abstract}
\emph{Randomized Smoothing (RS)} offers formal $\ell_2$ guarantees for arbitrary base classifiers but faces two key practical bottlenecks: (i) it often relies on \emph{noise-augmented training} to achieve nontrivial certificates, which increases training cost, can reduce clean accuracy, and weakens RS as a genuinely post-hoc defense; and (ii) certification is computationally expensive, typically requiring tens of thousands of noisy forward passes per input, which hinders deployment, especially on resource-constrained edge devices.
To address both limitations, we propose \emph{Laplace-Bridged Smoothing (LBS)}, an analytic reformulation of RS that replaces high-dimensional input-space Monte Carlo (MC) sampling with efficient computations in a low-dimensional probability space. 
LBS preserves formal robustness guarantees without requiring noise-augmented training while substantially reducing certification burden. 
On CIFAR-10 and ImageNet, LBS attains stronger certified robustness than RS and reduces per-sample certification cost by nearly an order of magnitude. 
Notably, on NVIDIA Jetson Orin Nano and Raspberry Pi 4, LBS achieves speedups of up to $494\times$, enabling practical certified deployment on real-world edge devices. 
Finally, we provide theoretical justification for the analytic formulation and certificate validity of LBS. 
\end{abstract}

\section{Introduction}

Deep neural networks (DNNs) have achieved remarkable success across many applications, yet they remain vulnerable to \emph{adversarial perturbations}, small often imperceptible input changes that can drastically alter predictions \citep{szegedy2013intriguing, goodfellow2014explaining,kurakin2018adversarial, akhtar2021advances,han2023interpreting, liu2025comprehensive}. A large body of work improves DNN robustness via adversarial training \citep{goodfellow2014explaining,madry2017towards,shafahi2019adversarial, zhang2019theoretically, wang2024revisiting}, which is effective in practice but lacks formal guarantees, incurs high computational cost from iterative attacks, and can be circumvented by adaptive attacks \citep{athalye2018obfuscated,uesato2018adversarial}. To obtain sound guarantees, the community has increasingly turned to \emph{certified defenses}, which provide formal robustness guarantees against adversarial perturbations \citep{hein2017formal, raghunathan2018semidefinite, cohen2019certified,jeong2023confidence}.

Among certified defenses, \emph{Randomized Smoothing (RS)} is widely used for its simplicity and compatibility with arbitrary base classifiers. By injecting random noise into the input, RS guarantees prediction invariance to all adversarial perturbations within an $\ell_p$ radius~\citep{lecuyer2019certified,cohen2019certified,jia2019certified,salman2019provably,blum2020random,yang2020randomized,wang2020certifying,li2023sok,scholten2023hierarchical,wang2024drf}.
Despite this generality, RS faces two practical limitations in real-world applications.
First, meaningful certificates typically require the base classifier to be robust under noise, making RS heavily dependent on noise-augmented training, which increases training cost, can degrade clean accuracy, and weakens RS's appeal as a truly post-hoc defense (\Cref{sec:RS}). 
Second, RS relies on high-dimensional input-space MC sampling, often with thousands to tens of thousands of noisy forward passes per input, making certification expensive and difficult to scale, \emph{especially on resource-constrained devices where limited compute and memory preclude effective batching or parallelism} (\Cref{sec:RS}). Yet in many security- and safety-critical deployments (e.g., edge computing, robotics, and autonomous vehicles)~\citep{liu2019edge, kumar2024edge, huckelberry2024tinyml}, decisions must be certified \emph{at the point of use} to support real-time operation and meet system-level assurance requirements~\citep{tambon2022certify, sridhar2025approach}, which further amplifies the need for efficient on-device RS certification.
To address both limitations of RS, we introduce \emph{Laplace-Bridged Smoothing (LBS)}, a unified framework that combines RS with the Laplace-Bridge (LB) approximation to enable efficient and statistically grounded robustness certification within an $\ell_2$ radius. LBS analytically propagates input noise through a locally linearized feature extractor to approximate the noisy logit distribution, then applies a Gaussian–Dirichlet LB to derive a tractable Dirichlet surrogate for the smoothed predictive distribution. To estimate the top-class probability, LBS performs MC sampling directly in this low-dimensional Dirichlet space and applies a one-sided Beta quantile to obtain a certified lower bound. By shifting stochastic estimation from the high-dimensional input space to the low-dimensional probability simplex, LBS preserves statistical rigor while substantially reducing computational cost relative to standard RS, \textit{its advantage being especially pronounced in on-device settings, where eliminating repeated forward passes reduces the need for sequential inference and alleviates compute and memory bottlenecks} (Section~\ref{edge} and Appendix~\ref{edge_app}).
\textbf{Our contributions are fourfold:}

(1) We propose \emph{Laplace-Bridged Smoothing (LBS)}, an analytic reformulation of RS that replaces high-dimensional input-space sampling with low-dimensional Dirichlet sampling via a Gaussian–Dirichlet bridge, enabling statistically sound $\ell_2$ certification;

(2) We establish the theoretical foundation of LBS by deriving a closed-form mapping from linearized Gaussian logit statistics to Dirichlet parameters $\alpha$, yielding a tractable surrogate for the smoothed predictive distribution;

(3) On CIFAR-10 and ImageNet across diverse architectures, LBS improves certified accuracy while reducing certification cost by up to an order of magnitude; 

(4) On resource-constrained devices, including NVIDIA Jetson Orin Nano and Raspberry~Pi~4, LBS achieves up to $494\times$ speedups over RS, enabling practical on-device certification under tight compute and memory budgets.

\section{Related Work}\label{sec:relatedwork}


\paragraph{Certified defenses.}  
Certified defenses aim to provide formal robustness guarantees against adversarial perturbations. Early work focused on exact or approximate verification through convex relaxations or semidefinite programming \citep{ lomuscio2017approach, raghunathan2018semidefinite, wong2018provable} or bounding Lipschitz constant~\citep{hein2017formal, tsuzuku2018lipschitz, anil2019sorting}. Despite strong theoretical grounding, these approaches often suffer from scalability issues when applied to modern large-scale architectures.  

\vspace{-3mm}\paragraph{Randomized smoothing (RS).}  
RS has emerged as a widely adopted certified defense. \citet{lecuyer2019certified} established robustness guarantees for smoothed classifiers via a differential privacy perspective, \citet{cohen2019certified} provided tight $\ell_2$ guarantees by constructing Gaussian-smoothed classifiers. Subsequent works extended RS using adversarially trained base models~\citep{salman2019provably,nandy2021towards,wang2024drf} and attack-free objectives that directly maximize certified radii, such as MACER~\citep{zhai2020macer}. Extensions beyond $\ell_2$ to general $\ell_p$ norms have also been explored~\citep{blum2020random,tengl1,li2022double,li2023sok}. \emph{In practice, RS typically requires noise-augmented training to obtain nontrivial certificates.} Moreover, statistically valid certification relies on large-scale MC sampling, incurring substantial computational overhead and limiting scalability to large datasets and architectures. 

\vspace{-3mm}\paragraph{Accelerating certification.}
Several studies seek to reduce the computational cost of RS. \citet{feng2020regularized} proposes a regularized training scheme to lower certification cost, but it can introduce bias and degrade accuracy. \citet{chen2022input} dynamically allocates MC samples during certification, while \citet{seferis2023randomized,seferis2024estimating} observes that relatively few samples can already yield nontrivial certificates. \citet{ugare2023incremental} accelerates certification after model post-processing. More recently, \citet{bhardwaj2024accelerated} replace MC sampling with a learned surrogate for class count prediction, achieving empirical speedups but lacking formal guarantees, and \citet{voracek2024treatment} improve MC efficiency via optimal confidence intervals and confidence sequences. Despite these advances, existing methods still depend on extensive sampling or additional training assumptions; moreover, \textbf{\textit{all of these approaches rely on noise-augmented training of the base classifier}}, motivating the need for more efficient, post-hoc, and theoretically grounded certification, which is the focus of this work.

\section{Preliminary}\label{sec:rs}

In this section, we review (i) randomized smoothing (RS) and its two key limitations, and (ii) the Laplace Bridge (LB), which serves as a foundation for our proposed framework, Laplace-Bridged Smoothing (LBS).

\subsection{Randomized Smoothing (RS)}\label{sec:RS}
RS constructs a certifiably robust classifier from \emph{any} base classifier
$f:\mathbb{R}^d\!\to\!\mathcal{Y}$ by averaging its predictions under random perturbations of the input.
The classical setting uses Gaussian noise \citep{cohen2019certified}, defining the smoothed classifier as
\begin{equation}\label{eq:smooth_g}
    g(x) \;=\; \arg\max_{c\in\mathcal{Y}} \Pr\!\big(f(x+\epsilon)=c\big),
    \epsilon\sim\mathcal{N}(0,\sigma^2 I).
\end{equation}
Let $c_A$ denote the most probable class with probability $p_A$, and $c_B$ the
runner-up class with probability $p_B$.  
Under Gaussian noise, the smoothed classifier $g$ is certifiably constant within
the $\ell_2$ ball of radius $R = \frac{\sigma}{2}\!\left(\Phi^{-1}(p_A) - \Phi^{-1}(p_B)\right)$, where $\Phi^{-1}$ denotes the inverse CDF of the standard Gaussian..
In practice, the exact probabilities $p_A$ and $p_B$ are unknown and must be
estimated from MC algorithms. Therefore, replace $p_A$ with a lower
bound $\underline{p_A}$ and replace $p_B$ with an upper bound
$\overline{p_B}$, yielding a conservative certified radius: 
\begin{equation}
    R=\frac{\sigma}{2}\!\left(\Phi^{-1}(\underline{p_A})-\Phi^{-1}(\overline{p_B})\right),
    \label{eq:cert-radius-general}
\end{equation}

Following \citet{cohen2019certified} and later works~\citep{yang2020randomized,jeong2020consistency,chen2022input},
a standard simplification upper bounds $\overline{p_B}\le 1-\underline{p_A}$,
which reduces \eqref{eq:cert-radius-general} to the two-class radius
$R=\sigma\,\Phi^{-1}(\underline{p_A})$.

Although the smoothed classifier $g$ is robust under noise perturbations, RS presents the following two limitations.

\noindent\textbf{Dependency on Noise-Augmented Training.}
The theoretical guarantee in~\eqref{eq:cert-radius-general} formally holds for any base classifier. However, in practice, its effectiveness critically relies on the stability of the base model $f$ under noise perturbations, which is typically achieved through noise-augmented training \citep{lecuyer2019certified, cohen2019certified, yang2020randomized, chen2022input, voracek2024treatment}. 
Compared to noise-augmented training using the same noise distribution as in certification, if $f$ was trained solely on clean data without noise augmentation,
its predictions can fluctuate substantially under perturbations, and the resulting certified accuracy to degrade sharply, as shown in \Cref{fig:RS} .
Consequently, despite its theoretical generality, RS in practice \emph{requires}
a noise-aware or robustness-oriented base model—typically trained with noise augmentation to produce nontrivial certificates.
This dependency increases training cost, degrades clean accuracy, and limits RS’s practicality as a \emph{post-hoc} defense: while the certification theorem is model-agnostic, RS typically demands changes to the training procedure, preventing direct certification of off-the-shelf models.

\begin{figure}[htbp]
    \centering
    \includegraphics[width=0.48\textwidth]{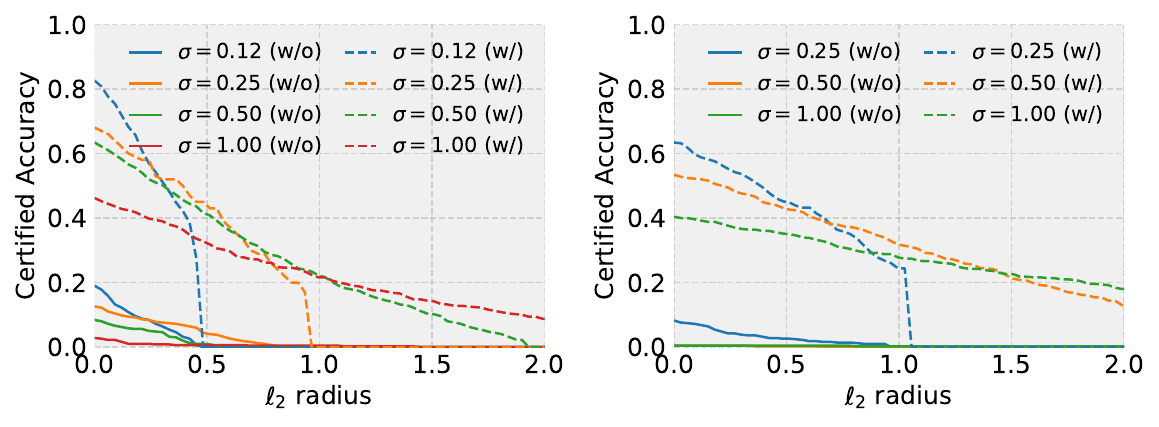}
    \vspace{-7mm}\caption{Certified accuracy of RS~\citep{cohen2019certified} on CIFAR-10 (left) and ImageNet (right) under different Gaussian perturbation levels $\sigma$  
(Solid lines: standard, non-noise-augmented training; dashed lines: standard, noise-augmented training).}
    \label{fig:RS}
\end{figure}


\noindent\textbf{Computational limitation.}
Another major drawback of RS is that, for neural network base
classifiers, the class probabilities cannot be computed in closed
form. Consequently, both the prediction of the smoothed classifier and the certified
radius must be estimated via MC algorithms; since the probabilities
$p_A$ and $p_B$ in~\eqref{eq:cert-radius-general} are intractable expectations
over $f(x+\epsilon)$, accurate certification requires a large number of samples,
leading to substantial computational overhead.

Typical implementations allocate a small pilot budget $n_0$ to identify $c_A$
and a much larger budget $n$ to obtain a one-sided confidence bound for $p_A$, 
commonly the Clopper--Pearson lower bound
${\underline{p_A}}=\mathrm{Beta}^{-1}(\alpha;k_A,n-k_A+1)$ at confidence level $1-\alpha$,
where $k_A$ counts the noisy samples classified of $c_A$.
While statistically rigorous, this sampling step dominates the overall cost:
achieving tight bounds typically requires $n \in [10^4, 10^5]$ forward passes per input,
rendering RS computationally prohibitive for large-scale datasets and, in particular,
for resource-constrained edge devices that lack sufficient parallelism to amortize repeated evaluations.
\textit{In this paper, our aim is to address the two limitations of RS.}

\subsection{The Laplace Bridge (LB)}\label{sec:laplace-bridge}
The LB constructs an analytic correspondence between distributions in logit space and distributions on the probability simplex~\citep{mackay1998choice,hobbhahn2020fast,hennig2012kernel}. 
Recently, it has been applied in Bayesian Neural Networks (BNNs)~\cite{hobbhahn2020fast,daxberger2021laplace} to relate Gaussian and Dirichlet distributions, providing a convenient framework for uncertainty modeling in classification tasks. Let $\pbm=(p_1,\ldots,p_K)^\top\in[0,1]^K$  be a simplex with $\sum_{k=1}^K p_k = 1$ and 
$\bm{\alpha}=(\alpha_1,\ldots,\alpha_K)^\top\in\mathbb{R}_+^K$. 
The Dirichlet distribution has the probability density function (pdf) as follow
\begin{align}\label{eq:pdf_p}
    \mathrm{Dir}_p(\pbm \mid \bm{\alpha})
    = \frac{\Gamma\!\left(\sum_{k=1}^K \alpha_k\right)}
    {\prod_{k=1}^K \Gamma(\alpha_k)}
    \prod_{k=1}^K p_k^{\alpha_k - 1}.
\end{align}
Note that the Dirichlet distribution is defined on a simplex and it can be combined with a softmax function, which is usually applied in classification tasks. 
Let $Z \sim\Ncal(\bm{\mu},\bm{\Sigma})$ be a multivariate Gaussian random variable with mean $\bm{\mu}\in\Rbb^K$ and covariance $\bm{\Sigma}\in S_+^{K}$, where $S_+^{K}$ denotes the set of positive semidefinite matrices in $\Rbb^{K\times K}$. For a realization $\zbm$ of $Z$, the softmax is defined as
$p_k(\zbm)=\exp(z_k)/\sum_{l=1}^K\exp(z_l),\forall 1\leq k\leq K$ and $\zbm$ is also called as a logit of $\pbm$. Using the change of variables induced by the softmax transformation,
the Dirichlet density in \eqref{eq:pdf_p} can be equivalently expressed
in the logit space as
\begin{align}\label{eq:pdf_z}
    p_{\zbm}(\zbm)
    = \frac{\Gamma\!\left(\sum_{k=1}^K \alpha_k\right)}
    {\prod_{k=1}^K \Gamma(\alpha_k)}
    \prod_{k=1}^K p_k(\zbm)^{\alpha_k}.
\end{align}
The reformulation in the basis of logits given by \eqref{eq:pdf_z} offers two benefits: 1) $p_{\zbm}(\zbm)$ remains finite for $|\zbm|\rightarrow\infty$ when $\alpha_k<1$; 2) it is unimodal, with its mode attained at 
$\pbm(\zbm)=\bm{\alpha}/\|\bm{\alpha}\|$. Since $Z$ is Gaussian and its pdf is given in \eqref{eq:pdf_z}, 
there exists a relation between the Dirichlet parameter 
$\bm{\alpha}\in\Rbb_+^K$ and the Gaussian parameters 
$\bm{\mu}\in\Rbb^K,\bm{\Sigma}\in S_+^{K}$. We denote the resulting forward mapping by 
$F_{D\rightarrow G}:\Rbb_+^K\rightarrow\Rbb^K\times S_+^K$, 
and it admits~\cite{hennig2012kernel} that for all $1\leq k,l\leq K$,
\begin{align}
\mu_k & = \log \alpha_k-\frac{1}{K}\sum_{\ell=1}^K \log \alpha_\ell, \label{eq:lb-forward-mean}\\
\Sigmabm_{kl} & = \delta_{kl}\,\frac{1}{\alpha_k}-\frac{1}{K}\!\left(\frac{1}{\alpha_k}+\frac{1}{\alpha_l}
-\frac{1}{K}\sum_{\ell=1}^K\frac{1}{\alpha_\ell}\right). \label{eq:lb-forward-cov}
\end{align}
Since $F_{D\rightarrow G}$ is not injective, we use a pseudo-inverse $F_{G\rightarrow D}^\dagger$ to map Gaussian parameters back to Dirichlet parameters as follows:
\begin{equation}\label{eq:lb-inverse}
\alpha_k =\frac{1}{\Sigmabm_{kk}}\left(1-\frac{2}{K}
+\frac{e^{\mu_k}}{K^2}\sum_{\ell=1}^{K} e^{-\mu_\ell}\right),\forall 1\leq k\leq K,
\end{equation}
which ignores off-diagonal elements of $\bm{\Sigma}$. More discussions see Appendix C in \citet{hobbhahn2020fast}. A further benefit of the approximation~\eqref{eq:lb-inverse} arises from the convenient analytical properties of the Dirichlet exponential family. For example, a point estimate of its posterior is directly given by the Dirichlet’s mean, $\mathbb{E}[p_k]\approx \alpha_k/\alpha_0$ where $\alpha_0=\sum_{k=1}^K\alpha_k$.

\section{Proposed Approach}\label{sec:method}

We propose \emph{Laplace-Bridged Smoothing (LBS)}, a unified framework that integrates the LB into RS to deliver both computational efficiency and statistically grounded $\ell_2$ robustness certification. In contrast to classical RS, which depends heavily on noise-augmented training and extensive high-dimensional sampling, LBS analytically derives a predictive Dirichlet distribution that approximates the smoothed classifier under noise perturbations. This eliminates repeated noisy forward passes, enabling fast and scalable certified robustness estimation across $\ell_2$ norms. The overview of LBS is shown in Figure~\ref{fig:overview}.

\begin{figure}[htbp]
    \centering
    \includegraphics[width=0.48\textwidth]{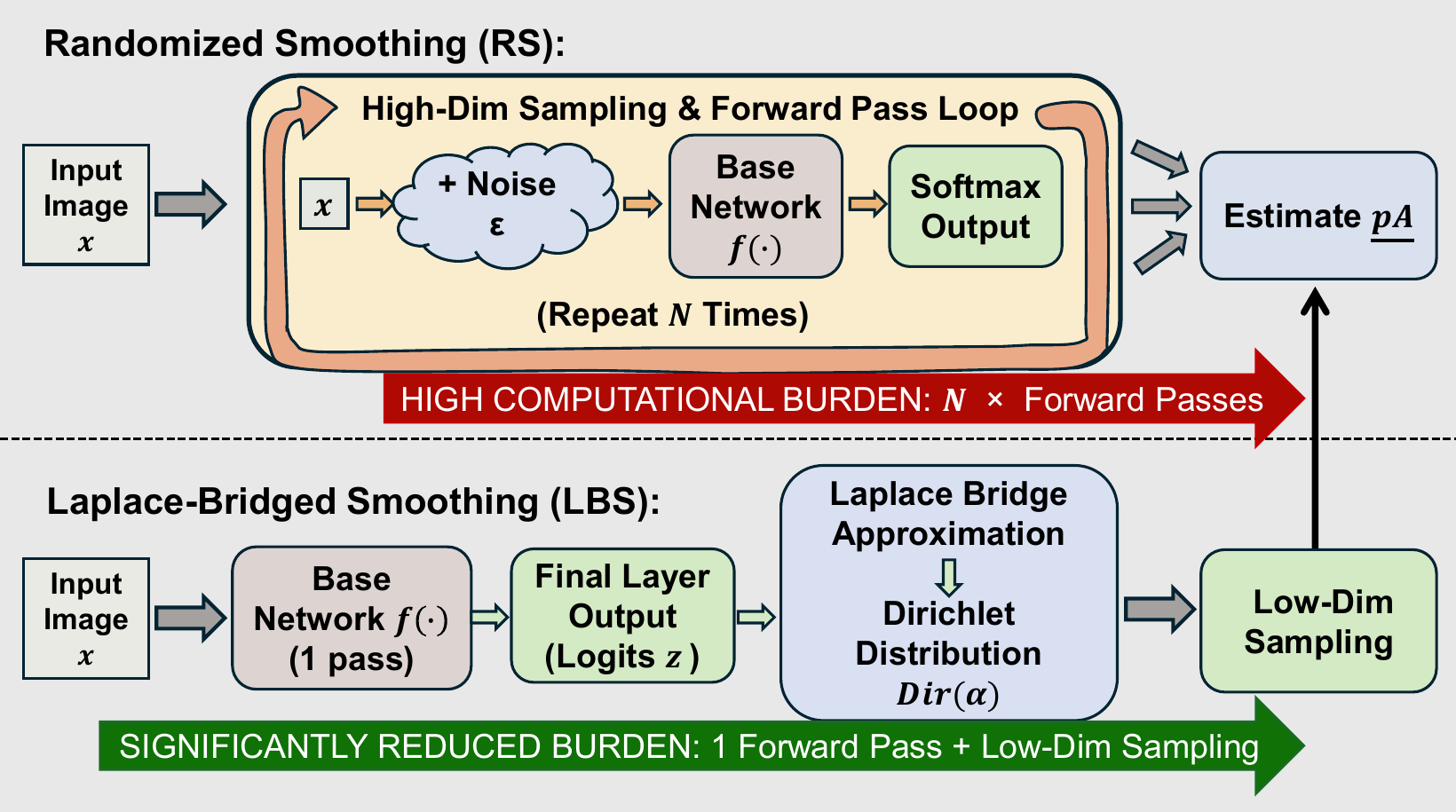}
    \vspace{-7mm}\caption{Overview of RS and LBS.
\textbf{RS}: Certification is performed by repeatedly adding high-dimensional noise to the input and running the base classifier multiple times, relying on repeated noisy forward passes to estimate class probabilities.
\textbf{LBS}: Certification first computes logits from a single forward pass of the base classifier, then applies a LB approximation and performs low-dimensional Dirichlet sampling, which significantly alleviates the computational burden of certification.}
    \label{fig:overview}
\end{figure}

\subsection{Laplace-Bridged Smoothing (LBS)}
Let $f(\cdot|\theta):\Rbb^d\rightarrow\Ycal$ be a neural network with $L$ layers, where $\mathcal{Y}=\{1,\dots,K\}$ denotes the label set with $K$ classes and $\theta$ denote the parameter of all layers. Suppose a feature extractor $\phi(x)\in\Rbb^D$ denote the output of the first $L-1$ layers. The final-layer output is then defined as
$\zbm(x)=W_{\MAP}\phi(x)\in\Rbb^K$ where $W_{\MAP}\in\Rbb^{K\times D}$ is the weight matrix of the last layer. 

Laplace approximation (LA)~\cite{mackay1992practical} can be used to approximate the posterior distribution of the model parameters. Let $\Dcal=\{(x_i,y_i)\}_{i=1}^n$ be the training set and $\Lcal(\Dcal|\theta)$ be the empirical risk. Assume $\theta_{\MAP}$ denote its minimizer, which is also known as a maximum a-posteriori (MAP) estimate. Under the LA, the posterior distribution of $\theta$ is given by   $p(\theta|\Dcal)\approx\Ncal(\theta_{\MAP},\Sigmabm_\theta)$
where $\Sigmabm_\theta = (\nabla_\theta^2\Lcal|_{\theta_{\MAP}})^{-1}$ is the inverse Hessian matrix of $\Lcal$. A practical simplification, used effectively in prior work~\cite{snoek2015scalable, kristiadi2020being}, is to apply the LA only to the final layer of the network rather than to all parameters. In this sense, we focus on the posterior of the last-layer weights $p(\vecc(W) |\Dcal) \approx \mathcal{N}(\vecc(W_{\MAP}), \Sigmabm_W)$, where $W$ denotes a random variable representing the last-layer weights under
the Laplace-approximated posterior, and $\Sigmabm_W$ is derived analogously to
$\Sigmabm_\theta$ as the inverse empirical Hessian with respect to $W$.

In the traditional RS, the smoothed classifier in \eqref{eq:smooth_g} is computed by drawing many samples of $f(x+\epsilon_i)$ and selecting the class that appears most frequently. This sampling procedure is necessary because the distribution of $f(x+\epsilon)$ is not known in closed form. However, under a Bayesian framework, we can approximate the behavior of the final-layer output directly. For a fixed perturbation $\epsilon$ with small $\|\epsilon\|$, we apply a first-order Taylor expansion to the feature extractor:
\[
\phi(x+\epsilon) \approx \phi(x) + J_\phi(x)\epsilon,
\]
where $J_\phi(x)$ is the Jacobian of $\phi$ with respect to $x$. It is natural to assume that $\epsilon$ and $W$ are independent and the logits $Z$ can be approximated as a Gaussian. The result is given in Proposition~\ref{prop:approx_z} and its proof can be found in the Appendix~\ref{proof:proposition-approx_z}. 


\begin{proposition}\label{prop:approx_z}
Let $\epsilon$ be a noise vector with zero mean and covariance $\Sigmabm_\epsilon$, independent of  $W\sim\Ncal(\mubm_W,\Sigmabm_W)$ where $\mubm_W := W_{\MAP}\in\Rbb^{K\times D}$ denotes the MAP estimate of $W$. Then the random logits $Z$ are approximately distributed as a Gaussian, $Z \sim \mathcal N(\boldsymbol\mu_z, \boldsymbol\Sigma_z)$,
where $\boldsymbol\mu_z$ and $\boldsymbol\Sigma_z$ are given by
\begin{align}
\bm{\mu}_z & = \mubm_W\phi(x), \nonumber\\
\boldsymbol\Sigma_z&= (I_K\!\otimes\!\phi^\top)\Sigmabm_W(I_K\!\otimes\!\phi^\top)^\top \nonumber\\
& + \mubm_{W}J_\phi\Sigmabm_\epsilon J_\phi^\top \mubm_W^\top
+ \operatorname{Tr}_D\!\big[(I_K\!\otimes\!J_\phi\Sigmabm_\epsilon J_\phi^\top)\Sigmabm_W\big],\label{eq:logit-cov}
\end{align}
and $\operatorname{Tr}_D : \mathbb{R}^{KD\times KD} \to \mathbb{R}^{K\times K}$ denotes the operator that maps a block matrix to the matrix whose $(i,j)$-th entry is the trace of its $D\times D$ block $\Sigmabm_{ij}$.
\end{proposition}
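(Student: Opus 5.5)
We would prove the proposition by computing the first two moments of the linearized logit $Z = W(\phi(x) + J_\phi\epsilon)$ exactly, and then taking the Gaussian with those moments as the approximation. Write $\phi=\phi(x)$, $J=J_\phi(x)$, and $u := \phi + J\epsilon \in \Rbb^D$. Then $u$ has mean $\phi$ and covariance $J\Sigmabm_\epsilon J^\top$, and it is independent of $W$. The key algebraic device is the vectorization identity
\[
Wu = (I_K\otimes u^\top)\vecc(W),
\]
with $\vecc$ stacking the rows of $W$ (the convention under which the stated formula holds). This rewrites the bilinear product as a linear map applied to the Gaussian vector $\vecc(W)$.

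\textbf{Mean.} By independence, $\Ebb[Z] = \Ebb[W]\,\Ebb[u] = \mubm_W\phi$, since $\Ebb[\epsilon]=0$.

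\textbf{Covariance.} We use the law of total covariance, conditioning on $u$:
\[
\Cov(Z) = \Ebb_u\big[\Cov(Z\mid u)\big] + \Cov_u\big(\Ebb[Z\mid u]\big).
\]

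The second term is handled directly. Since $\Ebb[Z\mid u] = \mubm_W u = \mubm_W\phi + \mubm_W J\epsilon$, its covariance is $\mubm_W J\Sigmabm_\epsilon J^\top\mubm_W^\top$.

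For the first term, the vectorization identity gives
\[
\Cov(Z\mid u) = (I_K\otimes u^\top)\Sigmabm_W(I_K\otimes u).
\]
The $(i,j)$ entry of this matrix is $u^\top \Sigmabm_{W,ij} u$, where $\Sigmabm_{W,ij}$ is the $(i,j)$ $D\times D$ block of $\Sigmabm_W$. We take its expectation with the standard quadratic-form identity
\[
\Ebb[u^\top A u] = \Ebb[u]^\top A\,\Ebb[u] + \operatorname{Tr}(A\Cov(u)),
\]
applied blockwise with $A=\Sigmabm_{W,ij}$. This yields two pieces:
\begin{itemize}
\item $\phi^\top\Sigmabm_{W,ij}\phi$, which reassembles to $(I_K\otimes\phi^\top)\Sigmabm_W(I_K\otimes\phi^\top)^\top$;
\item $\operatorname{Tr}(J\Sigmabm_\epsilon J^\top\Sigmabm_{W,ij})$, which is exactly the $(i,j)$ entry of $\operatorname{Tr}_D[(I_K\otimes J\Sigmabm_\epsilon J^\top)\Sigmabm_W]$. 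This holds because the $(i,j)$ block of $(I_K\otimes M)\Sigmabm_W$ is $M\Sigmabm_{W,ij}$.
\end{itemize}
Summing the second term and these two pieces gives \eqref{eq:logit-cov}.

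\textbf{Where the approximation enters.} The main obstacle is the step from "these are the moments" to "$Z$ is approximately Gaussian." $Z$ is a product of Gaussians (a Gaussian $W$ times an affine function of $\epsilon$), and a product of Gaussians is not Gaussian. We will therefore state explicitly that the claim is a moment-matching, Laplace-style approximation. There are two approximation steps, and only these:
\begin{itemize}
\item the first-order Taylor linearization of $\phi$;
\item replacing the law of $Z$ by the Gaussian with its exact first two moments.
\end{itemize}
Every other step above is exact. As a sanity check, we will note two limits in which the result is exactly Gaussian: when $\Sigmabm_W\to 0$, and when $\Sigmabm_\epsilon\to 0$. In both, the cross trace term vanishes.
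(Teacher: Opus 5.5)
Your proposal is correct and is essentially the paper's argument: linearize $\phi$, match the first two moments via the law of total covariance, apply the $\vecc$--Kronecker identity (your explicit row-stacking convention is exactly what makes $W\phi=(I_K\otimes\phi^\top)\vecc(W)$ hold), and use a trace identity. The only difference is that you condition on the noise $u$ rather than on $W$, so the trace cross term comes out of $\Ebb_u[\Cov(Z\mid u)]$ instead of $\Ebb[WJ_\phi\Sigmabm_\epsilon J_\phi^\top W^\top]$, which is a mirror image of the same computation. One caveat on your side remark: the proposition only assumes $\epsilon$ has zero mean and covariance $\Sigmabm_\epsilon$, so the linearized $Z$ is exactly Gaussian as $\Sigmabm_W\to 0$ (and is a ``product of Gaussians'') only when $\epsilon$ is itself Gaussian.
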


This Gaussian approximation captures the first-order propagation of both input perturbations and posterior curvature in parameter space, yielding a tractable characterization of logit variability under generic noise smoothing. Applying the LB approximation to $\pbm=\mathrm{softmax}(Z)$ yields a Dirichlet distribution $\pbm\sim \mathrm{Dir}_p(\pbm|\bm{\alpha})$, with parameters determined by \eqref{eq:lb-inverse} and using $\mubm=\mubm_z$ and $\Sigmabm=\Sigmabm_z$ of Proposition~\ref{prop:approx_z}.

This Dirichlet distribution acts as a proxy for the smoothed prediction distribution, with predictive $\mathbb{E}[p_k] = \alpha_k / \alpha_0$ and $\mathrm{Var}[p_k] = \alpha_k(\alpha_0 - \alpha_k)/[\alpha_0^2(\alpha_0 + 1)]$, here $\alpha_0 = \sum_{k=1}^{K} \alpha_k$. 
The predicted class is taken as
$c_A = \arg\max_k \alpha_k$, corresponding to the mode of the Dirichlet distribution.

\paragraph{Certification.}
Given the Dirichlet parameters $\alphabm$, we treat the distribution $\pbm\sim\mathrm{Dir}_p(\pbm|\bm{\alpha})$ as an analytic surrogate for the smoothed predictive distribution. 
To estimate the top-class probability, we draw $N$ MC samples $\pibm^{(1)}, \ldots, \pibm^{(N)} \sim \mathrm{Dir}_p(\pbm|\bm{\alpha})$ and estimate
\begin{equation}
p_A = \frac{1}{N} \sum_{i=1}^N \mathds{1}\left\{\arg\max_j \pi_j^{(i)} = c_A\right\}.
\end{equation}
A one-sided Binomial proportion confidence interval (e.g., Clopper--Pearson)
is then applied at level $1-\eta$ to obtain a statistically valid lower bound
\begin{align}\label{eq:pA_LBS}
    {\underline{p_A}}^{\text{(LBS)}}=\mathrm{Beta}^{-1}\left(\tfrac{\eta}{2}; n_A, N - n_A + 1\right),
\end{align}
Here, $\eta$ is the confidence parameter, so that the lower bound 
$\underline{p_A}^{(\mathrm{LBS})}$ holds with probability at least $1-\eta$,
where $\mathrm{Beta}^{-1}(q;a,b)$ is the $q$-th quantile of a Beta distribution $\mathrm{Beta}(a,b)$ and $n_A = \sum_{i=1}^N \mathds{1}\!\left\{\arg\max_j \pi_j^{(i)} = c_A\right\}$.

This procedure in \eqref{eq:pA_LBS} provides a lightweight estimate of the lower bound on the top-class probability $p_A$. Instead of estimating $p_A$ through repeated high-dimensional noisy forward passes of the base classifier, LBS constructs a low-dimensional probabilistic surrogate and samples from a $K$-dimensional Dirichlet distribution. Therefore, LBS preserves the same certification interface as standard RS~\citep{cohen2019certified}: once an LBS-based lower bound $\underline{p_A}^{(\mathrm{LBS})}$ is obtained, it can be plugged into the corresponding radius function $R(\cdot)$.

Under $\ell_2$ certification with Gaussian noise $\epsilon \sim \mathcal{N}(0,\sigma^2 I)$, LBS computes the certified radius:
\[
R\!\left(\underline{p_A}^{(\mathrm{LBS})}\right)
=
\sigma\,\Phi^{-1}\!\left(\underline{p_A}^{(\mathrm{LBS})}\right).
\]
where $\Phi^{-1}$ is the standard normal quantile function~\citep{cohen2019certified}.

\begin{algorithm}[t]
\caption{LBS Certification}
\label{alg:lbs}
\textbf{Require:} input $x$; feature extractor $\phi(\cdot)$; last-layer posterior $(W_{\mathrm{MAP}},\Sigmabm_W)$; noise variance $\Sigmabm_\epsilon$; Dirichlet sample size $N$; significance level $\eta$ \\
\textbf{Ensure:} predicted class $c_A$ and certified radius $R({\underline{p_A}}^{\text{(LBS)}})$

\begin{algorithmic}[1]
\STATE $\phi \leftarrow \phi(x)$; $J_\phi(x) \leftarrow \frac{\partial \phi(x)}{\partial x}$
\STATE $\mubm_z \leftarrow W_{\mathrm{MAP}}\,\phi$ 
\STATE $\Sigmabm_z \leftarrow (I_K \otimes \phi^\top)\,\Sigma_W\,(I_K \otimes \phi^\top)^\top
+W_{\mathrm{MAP}}J_\phi\Sigmabm_\epsilon J_\phi^\top W_{\mathrm{MAP}}^\top+ \operatorname{Tr}_D\!\big[(I_K\!\otimes\!J_\phi\Sigmabm_\epsilon J_\phi^\top)\Sigmabm_W\big]$
\FOR{$k=1,\ldots,K$}
    \STATE $\alpha_k \gets \frac{1}{[\Sigmabm_z]_{kk}}\!\left(1-\frac{2}{K}+\frac{e^{\mu_{z,k}}}{K^2}\sum_{\ell=1}^{K} e^{-\mu_{z,\ell}}\right)$
\ENDFOR
\STATE 
$c_A \gets \argmax_k \alpha_k$
\STATE Sample $\pi^{(1)},\ldots,\pi^{(N)} \sim \mathrm{Dir}_p(\pbm|\alphabm)$
\STATE $n_A \leftarrow \sum_{i=1}^N \mathds{1}\!\left[\arg\max_j \pi^{(i)}_j = c_A\right]$
\STATE $\underline{p_A} \leftarrow \mathrm{Beta}^{-1}\left(\tfrac{\eta}{2}; n_A, N - n_A + 1\right)$
\IF{$\underline{p_A} \le 0.5$}
    \STATE \textbf{return} \textsc{Abstain}
\ELSE
    \STATE \textbf{return} $(c_A, R({\underline{p_A}}^{\text{(LBS)}}))$
\ENDIF
\end{algorithmic}
\end{algorithm}




\subsection{Theoretical analysis}

Our proposed LBS framework applies LB within RS. In this section, we perform the error analysis of the LB approximation. 
One important aspect of the LBS framework is that it draws samples from the Dirichlet distribution rather than Gaussian distributions, which avoids repeatedly performing forward passes through the model. Let the logistic–normal predictive distribution be defined as
$\pibm=\softmax(Z)$ with $Z\sim\mathcal N(\mubm_z,\Sigmabm_z)$ and let its Dirichlet surrogate be denoted by
$\tilde\pibm\sim\mathrm{Dir}_p(\pbm| \bm{\alpha})$. 
\Cref{prop:lb_kl} establishes that $\pibm$ and $\tilde{\pibm}$ are close in distribution. In particular, the Kullback–Leibler (KL) divergence between $\pibm$ and $\tilde{\pibm}$ is bounded from above.

\begin{theorem}[Local KL bound for the LB]
\label{prop:lb_kl}
Let $\pibm=\softmax(Z)$ with $Z\sim\mathcal N(\mubm_z,\Sigmabm_z)$ and $\tilde\pibm\sim\mathrm{Dir}_p(\pbm| \bm{\alpha})$ be its Dirichlet surrogate where $\alphabm$ is given by \eqref{eq:lb-inverse}. Assume that:
\begin{enumerate}[nosep,label=(\roman*)]
\item $\|\Sigmabm_z\|_F \leq \varepsilon$ for some small $\varepsilon>0$;
\item the Jacobian and Hessian of the softmax mapping evaluated at $\mubm_z$
are uniformly bounded;
\item both densities $P_{\pibm}$ and $P_{\tilde{\pibm}}$ are twice continuously differentiable with uniformly bounded local curvature in a neighborhood of $\mathbb{E}[\pibm]$ inside the simplex $\Delta^{K-1}$. 
\end{enumerate}
Under assumptions (i)-(iii), there exists a constant $C(\mubm_z)>0$, independent of $\Sigma_z$, such that 
\begin{equation}
\label{eq:kl-local-bound}
\KL(P_{\pibm}\|P_{\tilde{\pibm}})\leq C(\mubm_z)\|\Sigmabm_z\|_F+\mathcal O(\|\Sigmabm_z\|_F^{\frac{3}{2}}).
\end{equation}
Moreover, $P_{\pibm} \rightarrow P_{\tilde{\pibm}}$ as
$\|\Sigma_z\|_F\to 0$ in the sense of local asymptotic normality
\citep{van2000asymptotic}.

\end{theorem}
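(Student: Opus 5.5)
The plan is to go through a Gaussian approximation on the simplex. Both $\pibm$ and $\tilde\pibm$ will be shown to be close to Gaussians with the same mean and nearly matching covariances, and the KL divergence will then be bounded by a Gaussian--Gaussian KL plus remainder terms.

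\textbf{Step 1: delta method for the logistic--normal law.} Write $Z=\mubm_z+\Sigmabm_z^{1/2}\xi$ with $\xi\sim\Ncal(\b0,I_K)$. Taylor-expand $\softmax$ around $\mubm_z$:
\[
\pibm=\softmax(\mubm_z)+J_s(\mubm_z)\Sigmabm_z^{1/2}\xi+\tfrac12 H_s(\mubm_z)[\Sigmabm_z^{1/2}\xi,\Sigmabm_z^{1/2}\xi]+\dots
\]
Assumption (ii) bounds $J_s$ and $H_s$. Hence $\pibm$ is, on the $(K-1)$-dimensional tangent space of the simplex, approximately $\Ncal(m,V)$ with $m=\softmax(\mubm_z)+\Ocal(\|\Sigmabm_z\|_F)$ and $V=J_s\Sigmabm_z J_s^\top$. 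The quadratic term contributes an $\Ocal(\|\Sigmabm_z\|)$ perturbation to the location. Since the spread scales like $\|\Sigmabm_z\|^{1/2}$, this is a relative error of order $\|\Sigmabm_z\|_F^{1/2}$, which is where the $3/2$ exponent will come from.

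\textbf{Step 2: Gaussian approximation of the Dirichlet.} Using the inverse map \eqref{eq:lb-inverse}, a small $\Sigmabm_z$ forces $\alpha_0=\sum_k\alpha_k\sim\|\Sigmabm_z\|^{-1}\to\infty$. The Dirichlet then concentrates, and it is asymptotically normal around $\alphabm/\alpha_0$ with covariance $(\mathrm{diag}(\bar\alphabm)-\bar\alphabm\bar\alphabm^\top)/\alpha_0$, where $\bar\alphabm=\alphabm/\alpha_0$. I will then check, using \eqref{eq:lb-forward-mean}--\eqref{eq:lb-forward-cov}, that:
\begin{itemize}[nosep]
\item $\bar\alphabm=\softmax(\mubm_z)+\Ocal(\|\Sigmabm_z\|)$;
\item this covariance agrees with $J_s\Sigmabm_z J_s^\top$ up to the off-diagonal part of $\Sigmabm_z$ that the pseudo-inverse discards.
\end{itemize}
The discarded part is itself bounded by $\|\Sigmabm_z\|_F$ in norm, so both Gaussians have covariances of order $\varepsilon$ differing by $\Ocal(\varepsilon)$ in absolute terms. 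Its effect goes into the constant $C(\mubm_z)$, which depends on the softmax geometry at $\mubm_z$ through (ii).

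\textbf{Step 3: assemble the bound.} Use the decomposition
\[
\KL(P_{\pibm}\|P_{\tilde\pibm})=\KL(\Ncal_1\|\Ncal_2)+\text{(approximation corrections)}.
\]
The closed-form Gaussian KL contains a trace term, a log-determinant term and a Mahalanobis mean term. The mean mismatch $\Ocal(\varepsilon)$ measured against covariance $\Ocal(\varepsilon)$ gives $\Ocal(\varepsilon)$. The covariance mismatch must be controlled at the order $C(\mubm_z)\|\Sigmabm_z\|_F$, and I will argue this by appeal to the paper's normalization in which the LB matching is exact on the diagonal. For the correction terms, I will use assumption (iii) (local $C^2$ densities with bounded curvature near $\mathbb{E}[\pibm]$) to carry out a local Edgeworth/Laplace expansion of each density around its Gaussian. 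This gives log-density errors of relative size $\Ocal(\|\Sigmabm_z\|^{1/2})$ per unit of leading term, and so the $\Ocal(\|\Sigmabm_z\|_F^{3/2})$ remainder. The tails outside the neighbourhood are exponentially small by Gaussian concentration of $Z$ and by concentration of the Dirichlet for large $\alpha_0$.

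\textbf{Step 4: LAN statement.} Rescale by $\|\Sigmabm_z\|_F^{-1/2}$ around the common center. Both rescaled laws converge to the same Gaussian limit, and the log-likelihood ratio admits a quadratic expansion with vanishing remainder, which is local asymptotic normality as in \citet{van2000asymptotic}. Convergence in total variation follows from the KL bound via Pinsker's inequality.

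\textbf{Main obstacle.} The hardest step is the covariance-mismatch term in Step 2. The pseudo-inverse ignores off-diagonal entries of $\Sigmabm_z$, and the forward map \eqref{eq:lb-forward-cov} is not injective, so the two limiting covariances genuinely differ. A naive Gaussian KL then has a trace/log-det term that is $\Ocal(1)$ in the ratio, not small. I would first try to show that, after projecting to the tangent space and using bounded curvature, this mismatch is of relative order $\|\Sigmabm_z\|_F$. If that fails, I would define $C(\mubm_z)$ to absorb the leading mismatch, interpreting the bound as local in the regime where $\Sigmabm_z$ is nearly of LB form.
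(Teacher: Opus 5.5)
The gap is the step you flag as the main obstacle, and it cannot be closed. With $\alphabm$ taken from \eqref{eq:lb-inverse}, the inequality \eqref{eq:kl-local-bound} with $C(\mubm_z)$ independent of $\Sigmabm_z$ is false. So neither the appeal to diagonal matching nor the fallback of absorbing the mismatch into $C(\mubm_z)$ can work. KL is invariant under the affine rescaling $x\mapsto(x-\softmax(\mubm_z))/\|\Sigmabm_z\|_F^{1/2}$. If both laws concentrate at the same point with leading covariances $V_1\neq V_2$ (each of order $\|\Sigmabm_z\|_F$), the divergence therefore tends to $\KL(\Ncal(0,V_1)\|\Ncal(0,V_2))>0$. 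An absolute $\Ocal(\varepsilon)$ covariance error is an $\Ocal(1)$ relative error and costs $\Ocal(1)$, not $\Ocal(\|\Sigmabm_z\|_F)$.

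Concretely, take $K=2$, $\mubm_z=0$, $\Sigmabm_z=\varepsilon I$; assumptions (i)--(iii) hold for each fixed small $\varepsilon$. Then $\pi_1=1/(1+e^{-(Z_1-Z_2)})$ with $Z_1-Z_2\sim\Ncal(0,2\varepsilon)$, so $\mathrm{Var}(\pi_1)\approx\varepsilon/8$. By contrast, \eqref{eq:lb-inverse} gives $\alpha_1=\alpha_2=1/(2\varepsilon)$ and $\mathrm{Var}(\tilde\pi_1)\approx\varepsilon/4$. The means agree, and $\mathrm{diag}(\Sigmabm_z)$ is exactly the diagonal of the forward covariance \eqref{eq:lb-forward-cov} of this $\alphabm$. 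Yet $\KL(P_{\pibm}\|P_{\tilde\pibm})\to\tfrac12(\log 2-\tfrac12)\approx 0.097$ as $\varepsilon\to0$. The reason is that $\pibm$ only sees $[\Sigmabm_z]_{11}+[\Sigmabm_z]_{22}-2[\Sigmabm_z]_{12}$, which is precisely the off-diagonal information the pseudo-inverse discards.

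The first bullet of your Step 2 also fails. $\alphabm/\alpha_0$ depends only on $\mubm_z$ and the ratios of the $[\Sigmabm_z]_{kk}$, so it does not approach $\softmax(\mubm_z)$ as $\Sigmabm_z\to0$. For $\Sigmabm_z=\varepsilon\,\mathrm{diag}(1,2)$ one gets $\alphabm/\alpha_0=(2/3,1/3)$ versus $\softmax(0)=(1/2,1/2)$, and the KL grows like $1/\varepsilon$. For the same reason, the two rescaled laws in Step 4 have different Gaussian limits.

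The paper's proof does not supply an idea that resolves this. It takes a different route, moment matching plus a second-order expansion of the log-density ratio, and removes the obstacle by assumption. Inside the proof it replaces \eqref{eq:lb-inverse} by \eqref{eq:alpha_set}, which demands Dirichlet mean $\softmax(\mubm_z)$ and covariance exactly $J\Sigmabm_zJ^\top$ with $J=J_{\softmax}(\mubm_z)$. It then uses \eqref{eq:log-p-exp} to bound the KL by $\tfrac12\|H\|_2\,\Ebb\|X-\mubm_\pi\|_2^2$ plus higher order, treating $H$ as bounded independently of $\Sigmabm_z$ via (iii).

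Both steps presuppose what your analysis shows is missing:
\begin{itemize}
\item For a given mean $p$, a Dirichlet covariance is confined to the one-parameter family $(\mathrm{diag}(p)-pp^\top)/(\alpha_0+1)$. So \eqref{eq:alpha_set} is unsatisfiable for generic $\Sigmabm_z$ once $K\ge 3$, and for $K=2$ it is not the $\alphabm$ of \eqref{eq:lb-inverse}, as the example shows.
\item A log-ratio Hessian that stays bounded while both laws concentrate at scale $\|\Sigmabm_z\|_F^{1/2}$ already means they agree to leading order. The expansion also omits the constant term $\log p_{\pibm}(\mubm_\pi)-\log p_{\tilde\pibm}(\mubm_\pi)$, where a log-determinant mismatch would sit.
\end{itemize}

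Your Gaussian-comparison route is the more informative one, because it isolates the hypothesis a correct version needs. $\Sigmabm_z$ should coincide, up to relative error $\Ocal(\|\Sigmabm_z\|_F)$, with the forward covariance \eqref{eq:lb-forward-cov} of the $\alphabm$ satisfying $\alphabm/\alpha_0=p=\softmax(\mubm_z)$. In that case:
\begin{itemize}
\item \eqref{eq:lb-inverse} returns (approximately) that $\alphabm$;
\item $J\mathbf{1}=0$ annihilates the rank-one terms of \eqref{eq:lb-forward-cov};
\item $J\,\mathrm{diag}(1/\alphabm)\,J=(\mathrm{diag}(p)-pp^\top)/\alpha_0$.
\end{itemize}
The two leading Gaussians then agree up to the factor $\alpha_0/(\alpha_0+1)$. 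Your Steps 1, 3 and 4 would plausibly give $\KL=\Ocal(\|\Sigmabm_z\|_F)$, with the skewness mismatch as the leading term. Without such a hypothesis the statement cannot be proved by any route.
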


The bound in \Cref{prop:lb_kl} shows that the LB approximation error scales linearly with the logit
covariance $\|\Sigmabm_z\|_F$ to first order. Consequently, the LB surrogate is accurate in regimes where the posterior over $W$ is concentrated or where the softmax map is locally flat.These are precisely the regimes that dominate robustness certification. Therefore, assumption (i) in \Cref{prop:lb_kl} is standard in Laplace-based analyses (e.g., \citealt{mackay1992practical,van2000asymptotic}). The proof is deferred to Appendix~\ref{proof:proposition4.2}.

\subsection{Computational efficiency}
Using LBS, we can analytically approximate the density of the softmax-Gaussian random variable produced by the Gaussian weight and injected noise as a Dirichlet distribution, rather than relying on a large number of samples. As shown in \eqref{eq:lb-inverse}, constructing the $K$ Dirichlet parameters $\alpha_K$ requires $\Ocal(K)$ computations, and sampling from the resulting distribution costs an additional $\Ocal(N)$, yielding a total complexity of $\Ocal(K+N)$. In contrast, RS requires generating the set ${f(x+\epsilon_i)}$ and then sampling, which costs $\Ocal(n C_f + N)$, where $n$ is the number of noisy samples and $C_f$ is the cost of a single forward pass. Typically, $C_f \gg N$ and $n C_f \gg K$, meaning that RS spends most of its computational resources generating noisy predictions, whereas the proposed LBS framework avoids this overhead entirely.

\section{Experiments}\label{sec:experiments}



\noindent \textbf{Datasets.}
We conduct experiments on CIFAR-10 \citep{krizhevsky2009learning} and ImageNet \citep{deng2009imagenet} benchmark datasets commonly used in certified robustness. 

\noindent \textbf{Training.}
For CIFAR-10, we employ a ResNet-110 classifier trained following the procedure of \citet{cohen2019certified}, except that we do \emph{not} apply noise augmentation during training. For ImageNet, we adopt a ViT-B/16 \citep{dosovitskiy2020image} model trained under the same data augmentation and optimization scheme, also without noise injection in the base training process. The results for different architectures (DenseNet-161 and EfficientNet-B0) are shown in Appendix~\ref{model_architectures}. 

To demonstrate the scalability of our approach across diverse computational environments, all experiments are conducted on NVIDIA A100-SXM4-80GB GPUs as well as resource-constrained edge devices including a NVIDIA Jetson Orin Nano and a Raspberry~Pi~4. The real on-device deployment setup and implementation details are provided in Appendix~\ref{edge_app}, with results discussed in Section~\ref{edge}.

\paragraph{Evaluation metrics.}
We consider two metrics: (a) certified accuracy at radius $R$, defined as the fraction of correctly classified test samples that are certifiably robust within radius $R$; and (b) certification time, measured as the average end-to-end wall-clock latency per sample.  


\begin{figure}[htbp]
    \centering
    \includegraphics[width=0.48\textwidth]{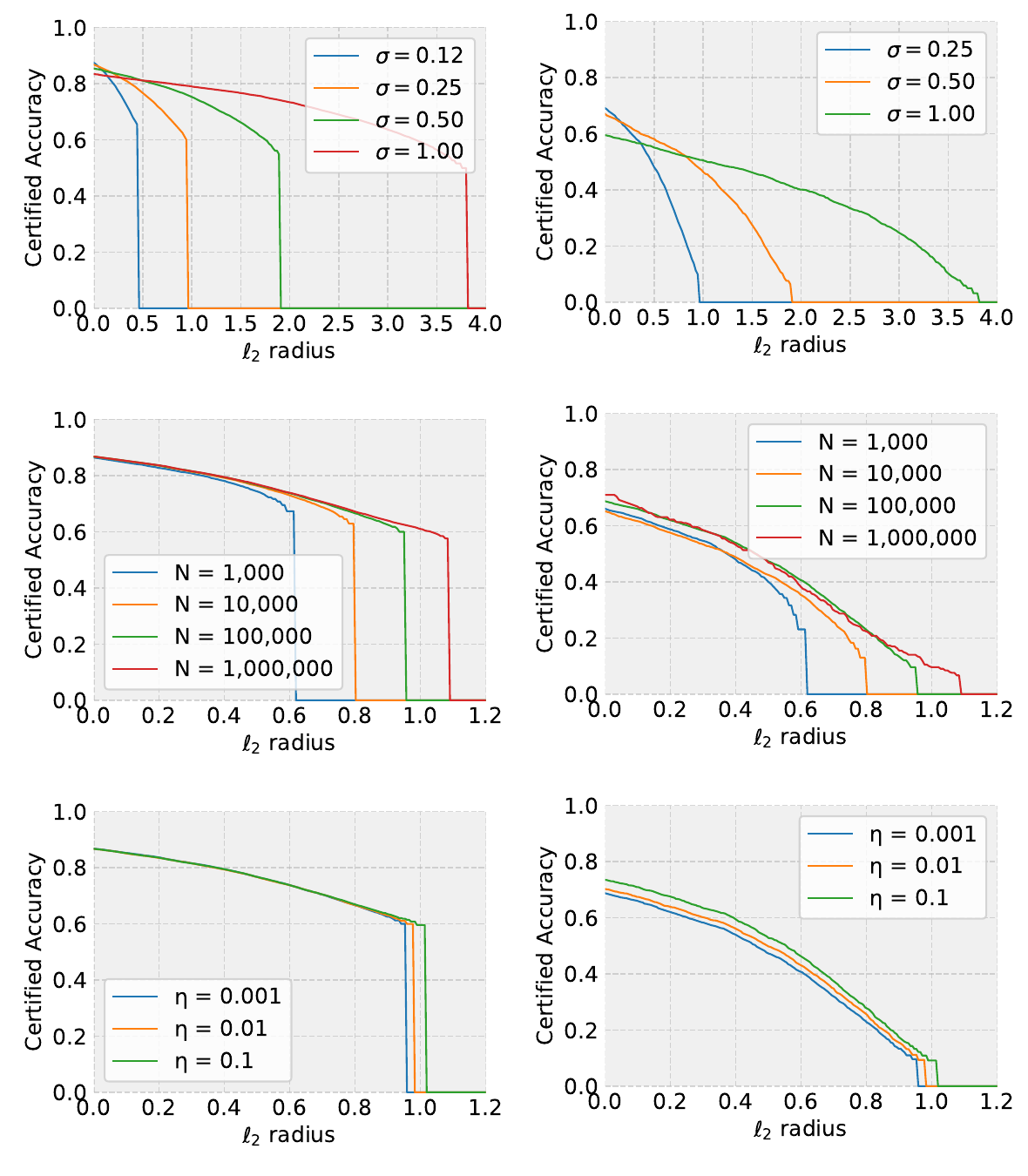}
    \vspace{-7mm}\caption{Certified accuracy of LBS on CIFAR10 (left) and ImageNet (right) under different perturbation levels $\sigma$ with $N$ =100,000, $\eta$ = 0.001 (top), various Dirichlet sample size $N$ with $\sigma$ = 0.25, $\eta$ = 0.001 (middle) and confidence levels $\eta$ with $\sigma$ = 0.25, $N$ =100,000 (bottom).}
    \label{fig:LBS}
\end{figure}
\subsection{LBS Certification}\label{performance_LBS}
Figure~\ref{fig:LBS} reports the certified accuracy of LBS on CIFAR-10 (left) and ImageNet (right) while varying the noise level $\sigma$ (top), the Dirichlet sample size $N$ (middle), and the confidence level $\eta$ (bottom) used in the one-sided bound.
As shown, $\sigma$ governs the robustness--accuracy trade-off: smaller $\sigma$ yields higher clean accuracy but smaller certified radii, whereas larger $\sigma$ improves certified robustness at some accuracy cost.
Increasing $N$ yields modest yet consistent gains in certified radii, while decreasing $\eta$ produces slightly more conservative certificates.
Overall, these results demonstrate that LBS exhibits stable and
predictable certification behavior across a wide range of hyperparameter
settings.
Although LBS operates effectively without noise augmentation, we also examine its behavior when noise augmentation is applied. The results are shown in Appendix~\ref{LBS_with_noise_augmentation}.

\subsection{Efficiency and Robustness}
Prior work has sought to reduce the computational cost of RS from different perspectives. To evaluate both robustness and efficiency, we compare LBS with standard RS~\cite{cohen2019certified}, ISS~\cite{chen2022input}, ADRE~\cite{feng2020regularized}, Betting CS~\cite{voracek2024treatment}, Accelerated Smoothing (AS)~\cite{bhardwaj2024accelerated}, ensemble-based Gaussian smoothing~\cite{horvath2022boosting}, and UCAN~\cite{hong2025towards}. \Cref{tab:efficiency} reports certified accuracy at multiple $\ell_2$ radii together with the per-sample certification time.

The results show that LBS achieves a strong overall robustness--efficiency trade-off on both CIFAR-10 and ImageNet. In terms of certified robustness, LBS obtains the best certified accuracy on CIFAR-10 across all reported radii, and achieves the strongest or highly competitive performance on ImageNet. At the same time, LBS substantially reduces certification time, requiring only $0.7$s per sample on CIFAR-10 and $10.8$s on ImageNet, compared with $9.8$s and over $120$s for standard sampling-based RS methods.

Although AS also achieves fast inference-time certification, reporting $0.4$s on CIFAR-10 and $6.2$s on ImageNet, it relies on training an additional surrogate model and offline target generation, with reported costs of up to $4200$s for surrogate training and 180--190 hours of target generation. More broadly, existing acceleration and robustness-oriented methods still depend on repeated sampling from the smoothing distribution or additional training components. In contrast, LBS reduces certification cost by replacing repeated input-space noisy forward passes with a low-dimensional Laplace-Bridge surrogate, achieving strong certified robustness with much lower certification time.

\begin{table}[h]
\centering
\resizebox{0.48\textwidth}{!}{
\begin{tabular}{c|lcccccc}
\toprule
 & $\ell_2$ radius  & 0.0 & 0.12 & 0.25 & 0.5 & 1.0 & Time(s) \\
\midrule
\multirow{14}{*}{\rotatebox{90}{CIFAR-10}}
 & RS (w/o)  & 8.4&6.2 & 4.8 & 0.8&0.0& 9.8 \\
 & ADRE (w/o) & 12.6 & 10.0 & 8.4 & 4.2 & 0.0 & 9.6\\
 & ISS (w/o) &6.8& 4.4 & 0.8 & 0.4 & 0.0 & 6.6\\
  & RS (w/)  & 63.4&58.4 & 51.8 & 41.4&21.8& 9.8 \\
 & ADRE (w/) & 65.2 & 61.0 & 56.4 & 46.7 & 28.1 & 9.6\\
 & ISS (w/) &61.2& 56.6 & 49.9 & 39.3 & 20.0 & 6.6\\
& Betting CS (w/o)  & 9.1&8.0 & 6.2 & 4.7&0.0& 1.2 \\
& AS (w/o)  & 13.0&11.6 & 9.8 & 6.0&0.0& 0.4 \\
 & UCAN (w/o) & 10.8&9.2 &7.6 &4.0&0.0& 9.4 \\
& Betting CS (w/)  & 63.6&58.8 & 51.2 &40.9&22.0& 1.2 \\
& AS (w/)  & 66.4&63.8 & 59.3 &48.1&30.4& 0.4 \\
& Ensemble (w/) & 68.8&64.7 &60.1 &48.4&29.5& 9.2 \\
& UCAN (w/) & 83.3&81.7 &80.6&78.4&74.0& 9.4 \\
 & \textbf{LBS} & \textbf{85.4} & \textbf{84.4} & \textbf{83.4} & \textbf{81.1} & \textbf{75.2} & \textbf{0.7} \\
\midrule
\multirow{14}{*}{\rotatebox{90}{ImageNet}}
 & RS (w/o)  & 0.4&0.4 & 0.4 & 0.2&0.0& 123.5 \\
 & ADRE (w/o) & 4.2 & 2.8 & 2.2 & 0.6 & 0.0 & 122.7\\
 & ISS (w/o) & 0.4&0.4 & 0.4 & 0.2&0.0 & 109.9\\
 & RS (w/)  &56.5& 55.4 & 52.7 & 45.4 & 34.6 & 123.5\\
 & ADRE (w/) &60.1& 57.5 & 55.3 & 49.9 & 38.8& 122.8 \\
 & ISS (w/) & 53.6 & 50.5 & 48.7 & 41.0 & 30.4& 109.9 \\
 & Betting CS (w/o)  & 0.6&0.6 & 0.4 & 0.4&0.0& 24.9 \\
 & AS (w/o)  & 4.0&3.2 & 2.6 & 0.6&0.0& 6.2 \\
 & UCAN (w/o) & 4.0&3.2 &2.4 &0.6&0.0& 121.6 \\
& Betting CS (w/)  & 56.4&55.4 &53.0 &45.6&34.2& 24.9 \\
& AS (w/)  & 58.4&57.0 & 53.9 &48.1&34.8& 6.2 \\
& Ensemble (w/) & 61.2&59.8 &56.5 &50.1&39.6& 120.3 \\
& UCAN (w/) & 65.3&63.7 &62.0 &\textbf{58.9}&\textbf{50.1}& 121.6 \\
 & \textbf{LBS} & \textbf{67.0} & \textbf{64.9} & \textbf{62.5} & 58.2 & 46.5 & \textbf{10.8} \\
\bottomrule
\end{tabular}}
\caption{Certified accuracy and certification time of LBS versus prior state-of-the-art methods at different $\ell_2$ radii with $\sigma$ = 0.5 and $N$ =100,000 on CIFAR-10 and ImageNet (w/o: standard, non-noise-augmented training; w/: standard, noise-augmented training).}
\label{tab:efficiency}
\end{table}

\subsection{On-device LBS Certification}\label{edge}

We evaluate the efficiency and practicality of LBS under on-device computational constraints, which provide a stringent testbed for scalable certified robustness.
Compared to server-class hardware, on-device platforms operate under limited compute, memory, and power budgets, making the high MC cost of conventional RS particularly prohibitive. 

We evaluate on-device certification on two representative platforms: NVIDIA Jetson Orin Nano and Raspberry~Pi~4. 
On both devices, we deploy ResNet110, SqueezeNet, and ShuffleNet classifiers trained on CIFAR-10 without noise augmentation following~\citet{cohen2019certified}.
Figure~\ref{fig:LBS_edgedevices_sigma} reports the certified accuracy of LBS under varying perturbation levels $\sigma$.
Across both platforms, LBS exhibits consistent certification trends that mirror those on high-end GPUs: smaller $\sigma$ yields higher clean accuracy but smaller certified radii, while larger $\sigma$ improves certified robustness at the cost of accuracy.


Table~\ref{tab:certfication_time_edge} reports the per-sample certification time of RS and LBS under multiple CIFAR-10 model architectures.
On the Jetson Orin Nano, LBS consistently outperforms RS, reducing the certification time from 91.8,s to 2.8,s for ResNet110 and from 33.3,s to 1.2,s for SqueezeNet.
On the Raspberry~Pi~4, the efficiency gains are even more pronounced: LBS accelerates certification from 690.9,s to 1.4,s for SqueezeNet and from 240.2,s to 0.9,s for ShuffleNet. These results show that LBS dramatically reduces on-device certification latency, enabling practical and scalable robustness guarantees on resource-constrained edge hardware. Additional on-device results, including certified accuracy trends under varying sample sizes~$N$, are provided in the Appendix~\ref{edge_app}.

\begin{figure}[htbp]
    \centering
    \includegraphics[width=0.48\textwidth]{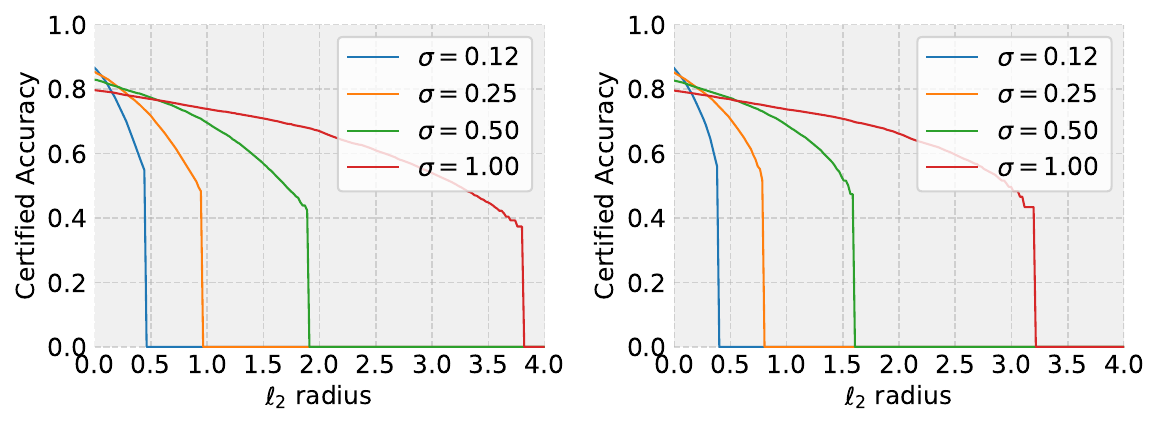}
    \vspace{-7mm}\caption{Certified accuracy of LBS on the NVIDIA Jetson Orin Nano (left) and the Raspberry~Pi~4 (right) using a SqueezeNet-based classifier on CIFAR-10 under different perturbation levels $\sigma$, with $N=100{,}000$ and $\eta=0.001$.}
    \label{fig:LBS_edgedevices_sigma}
\end{figure}

\begin{table}[h]
\centering
\resizebox{0.48\textwidth}{!}{
\begin{tabular}{lcccc}
\toprule
 & \multicolumn{2}{c}{Jetson Orin Nano} & \multicolumn{2}{c}{Raspberry~Pi~4} \\
\cmidrule(lr){2-3}\cmidrule(lr){4-5}
 & ResNet110 & SqueezeNet & SqueezeNet& ShuffleNet \\
\midrule
RS      & 91.8  & 33.3  & 690.9 &240.2\\
LBS     & 2.8   & 1.2   & 1.4   & 0.9 \\
\textbf{Speedup} & \textbf{33$\times$} & \textbf{28$\times$} & \textbf{494$\times$} & \textbf{267$\times$}\\
\bottomrule
\end{tabular}}
\caption{Certification time (s) per sample for RS and LBS with $N$ =100,000 on CIFAR-10 across different model architectures, together with the resulting speedup of LBS over RS.}
\label{tab:certfication_time_edge}
\end{table}

\section{Discussion}\label{sec:discussion}

Our current theoretical development (Proposition~\ref{prop:approx_z}) is derived under isotropic Gaussian noise, which directly supports certification in the $\ell_2$ setting.

To illustrate this practical generality, we additionally evaluated LBS under $\ell_1$ and $\ell_\infty$ certification following~\citet{yang2020randomized} to compute the corresponding radii. 



Table~\ref{tab:l1_baselines} reports $\ell_1$ certified accuracy across radii and compares LBS with prior $\ell_1$ RS baselines under Laplace and Uniform noise.
Without noise-augmented training, all baselines collapse to very low certified accuracy on CIFAR-10 and ImageNet (below 15\% and 7\%, respectively), indicating a strong reliance on noise augmentation.
Even with noise augmentation, LBS consistently outperforms the strongest baselines across all $\ell_1$ radii on both datasets, demonstrating that LBS substantially reduces dependence on noise-augmented training while preserving strong certified robustness.

Appendix~\ref{l_infinity} further evaluates LBS under an $\ell_\infty$ threat model, demonstrating stable and consistent certification behavior.
We also validate the robustness of LBS under both $\ell_1$ and $\ell_\infty$ perturbations in on-device settings (Appendix~\ref{edge_app}), confirming its suitability for deployment on resource-constrained platforms.
Together, these results show that LBS is a deployment-ready certification framework with strong performance across $\ell_p$ threat models.
Extending the theoretical analysis of LBS to more general noise distributions remains an important direction for future work.

\begin{table}[h]
\centering
\resizebox{0.48\textwidth}{!}{
\begin{tabular}{c|lccccccc}
\toprule
& $\ell_1$ radius $\rightarrow$ & 0.5 & 1.0 & 1.5 & 2.0 & 2.5 & 3.0 \\
\midrule
\multirow{8}{*}{\rotatebox{90}{CIFAR-10}}
& Laplace (w/o), \citet{lecuyer2019certified} & 10.6 & 10.0 & 9.4 & 8.3 & 7.8 & 7.1 \\
& Laplace (w/o), \citet{tengl1}  & 12.8 & 12.3 & 12.2 & 11.2 & 10.4 & 10.2 \\
& Uniform (w/o), \citet{yang2020randomized}  & 10.2 & 10.4 & 9.2 & 10.6 & 10.1 & 9.1 \\
& Laplace (w/), \citet{lecuyer2019certified} & 55.7 & 33.2 & 21.6 & 17.0 & 10.5 & 6.4 \\
& Laplace (w/), \citet{tengl1}  & 60.4 & 38.8 & 24.1 & 16.3 & 10.7 & 7.1 \\
& Uniform (w/), \citet{yang2020randomized}  & 69.8 & 59.4 & 50.3 & 43.7 & 31.9 & 29.2 \\
& \textbf{Laplace, LBS} & \textbf{84.3} & \textbf{82.0} & \textbf{80.2} & \textbf{78.4} & \textbf{77.0} & \textbf{76.1}  \\
& \textbf{Uniform, LBS} & \textbf{86.1} & \textbf{85.0} & \textbf{83.8} & \textbf{82.7} & \textbf{81.9} & \textbf{81.0} \\
\midrule
\multirow{8}{*}{\rotatebox{90}{ImageNet}}
& Laplace (w/o), \citet{lecuyer2019certified} &  5.4 & 3.9 & 3.0 & 2.2 & 1.8 & 1.0 \\
& Laplace (w/o), \citet{tengl1} & 5.8 & 4.2 & 2.9 & 2.1 & 1.4 & 0.7 \\
& Uniform (w/o), \citet{yang2020randomized} & 5.7 & 2.7 & 2.0 & 1.6 & 1.1 & 0.7 \\
& Laplace (w/), \citet{lecuyer2019certified} &  42.8 & 39.5 & 34.2 & 22.8 & 18.9 & 10.4 \\
& Laplace (w/), \citet{tengl1} & 47.1 & 39.7 & 31.6 & 26.5 & 23.0 & 18.2 \\
& Uniform (w/), \citet{yang2020randomized} & 53.6 & 48.2 & 42.5 & 37.7 & 35.1 & 30.8 \\
& \textbf{Laplace, LBS} & \textbf{70.5} & \textbf{63.1} & \textbf{57.0} & \textbf{52.8} & \textbf{50.7} & \textbf{48.1} \\ 
& \textbf{Uniform, LBS} & \textbf{71.6} & \textbf{66.3} & \textbf{61.4} & \textbf{54.6} & \textbf{52.5} & \textbf{49.9} \\
\bottomrule
\end{tabular}}
\caption{Certified accuracy of LBS vs previous state-of-the-art, at various $\ell_1$ radius on CIFAR-10 and ImageNet datasets (w/o: standard, non-noise-augmented training; w/: standard, noise-augmented training).}
\label{tab:l1_baselines}
\end{table}

\section{Conclusion}\label{sec:conclusion}
We proposed \emph{Laplace-Bridged Smoothing (LBS)}, an analytic framework that simultaneously improves certified accuracy and certification efficiency over conventional RS.
By propagating input noise through a locally linearized model and leveraging a LB to construct a Dirichlet surrogate, LBS enables statistically sound robustness certification with substantially reduced computational cost.
Across CIFAR-10 and ImageNet, LBS achieves higher certified accuracy than standard RS while accelerating certification by nearly an order of magnitude.
On resource-constrained edge devices such as Jetson Orin Nano and Raspberry~Pi~4, LBS achieves speedups of up to $494\times$, benefiting from avoiding high-dimensional MC sampling that cannot be efficiently batched or parallelized under limited compute and memory, demonstrating its practicality for real-world deployment.

\section{Impact Statement}
This paper presents work whose goal is to advance the field of Machine Learning. There are many potential societal consequences of our work, none which we feel must be specifically highlighted here.

\bibliography{ref}
\bibliographystyle{icml2026}

\newpage
\appendix
\onecolumn

\subsection*{Appendix Contents}
\addcontentsline{toc}{subsection}{Appendix Contents}
\vskip3pt\hrule\vskip5pt

\begin{itemize}
    \setlength\itemsep{0.8em}

    \item \hyperref[proof:proposition-approx_z]{\textbf{A.1} Proof of Proposition 4.1}
    \item \hyperref[proof:proposition4.2]{\textbf{A.2} Proof of Theorem 4.2}
    \item \hyperref[l_infinity]{\textbf{A.3} LBS Certification under $\ell_\infty$ Perturbations}
    \item \hyperref[model_architectures]{\textbf{A.4} Effect of Different Model Architectures}
    \item \hyperref[LBS_with_noise_augmentation]{\textbf{A.5} LBS Certification with Noise Augmentation}
    \item \hyperref[edge_app]{\textbf{A.6} On-device LBS Certification}
    \item \hyperref[llms]{\textbf{A.8} Use of Large Language Models (LLMs)}
\end{itemize}

\vskip3pt\hrule\vskip5pt
\clearpage
\section{Appendix}\label{sec:appendix}

\subsection{Proof of Proposition~\ref{prop:approx_z}}\label{proof:proposition-approx_z}

\begin{proof}
For a fixed perturbation $\epsilon$ with sufficiently small norm $\|\epsilon\|$, a first-order Taylor approximation yields
$\phi(x+\epsilon)\approx \phi(x)+J_\phi(x)\epsilon$, 
where $J_\phi(x)=\partial \phi(x)/\partial x$ is the Jacobian matrix. Consequently, $\zbm(x+\epsilon)=W\phi(x+\epsilon)\approx W\big(\phi(x)+J_\phi(x)\epsilon\big)$.

We model the uncertainty of the last-layer weights $W$ using the Laplace approximation (LA) \cite{snoek2015scalable,kristiadi2020being}. Specifically,
\[
\operatorname{vec}(W)\sim\mathcal{N}\!\big(\operatorname{vec}(W_{\mathrm{MAP}}),\Sigmabm_W\big),
\]
where $\operatorname{vec}(W)\in\mathbb{R}^{KD}$ denotes the vectorization of $W$, and $\Sigmabm_W := \Big(-\nabla^2\big|_{W=W_{\mathrm{MAP}}}\log p(W|\mathcal{D})\Big)^{-1}
\in\mathbb{R}^{KD\times KD}$ 
is the inverse Hessian of the log-posterior evaluated at the MAP estimate $W_{\mathrm{MAP}}$.

Let $w_i\in\mathbb{R}^D$ denote the $i$-th row of $W$, so that $W=[w_1,\ldots,w_K]^\top\in\mathbb{R}^{K\times D}$. 
We partition the covariance matrix $\Sigmabm_W$ into blocks, $\Sigmabm_W=
\begin{bmatrix}
\Sigmabm_{11} & \cdots & \Sigmabm_{1K}\\
\vdots & \ddots & \vdots\\
\Sigmabm_{K1} & \cdots & \Sigmabm_{KK}
\end{bmatrix},$
where each block $\Sigmabm_{ij}\in\mathbb{R}^{D\times D}$. It follows that each row satisfies $w_i\sim\mathcal{N}(\mu_i,\Sigmabm_{ii})$, where $\mu_i$ is the $i$-th row of $W_{\mathrm{MAP}}$.

Define $U := W\big(\phi(x)+J_\phi(x)\epsilon\big)\in\mathbb{R}^K$.
Each component $u_i = w_i^\top(\phi+J_\phi\epsilon)$ is therefore a bilinear form involving one Gaussian random vector and one general random vector. 
The marginal distribution $p(u)=\int p(u| W)p(W)\,dW$
is generally non-Gaussian and does not admit a closed-form expression.
A standard and tractable approach is to approximate $U$ by a moment-matched Gaussian distribution \cite{hernandez2015probabilistic,ghosh2016assumed}, $q(u)=\mathcal{N}(\mubm_u,\Sigmabm_u)$, 
where $\mubm_u=\mathbb{E}[U]$ and $\Sigmabm_u=\operatorname{Cov}(U)$.

The mean is given by $\mathbb{E}[U]=\mathbb{E}[\mathbb{E}[U| W]]=W_{\mathrm{MAP}}\phi$. 
The covariance follows from the law of total variance:
\begin{align}\label{eq:cov_U}
\operatorname{Cov}(U)
&= \operatorname{Cov}\big(\mathbb{E}[U| W]\big)
+ \mathbb{E}\big[\operatorname{Cov}(U| W)\big] = \operatorname{Cov}(W\phi)
+ \mathbb{E}\big[W J_\phi\Sigmabm_\epsilon J_\phi^\top W^\top\big].
\end{align}
Using the mixed-product property of matrix product and Kronecker product $\vecc(BVA^\top)=(A\otimes B)\vecc(V)$, we can rewrite $W\phi=(I_K\otimes \phi^\top)\vecc(W)$
which leads to 
\begin{align}\label{eq:cov_U1}
    \operatorname{Cov}(W\phi)= (I_K\otimes \phi^\top)\Sigmabm_W(I_K\otimes \phi^\top)^\top.
\end{align}
Next we deal with the second term, $\mathbb{E}\big[W J_\phi\Sigmabm_\epsilon J_\phi^\top W^\top\big]$. For each $(i,j)$, it follows that
\begin{align*}
\mathbb{E}[WJ_\phi\Sigmabm_\epsilon J_\phi^\top W^\top]_{ij}
&= \mathbb{E}[w_i^\top J_\phi\Sigmabm_\epsilon J_\phi^\top w_j] = \operatorname{Tr}\big(J_\phi \Sigmabm_\epsilon J_\phi^\top \mathbb{E}[w_j w_i^\top]\big) =\operatorname{Tr}\big(J_\phi\Sigmabm_\epsilon J_\phi^\top(\Sigmabm_{ji}+\mu_j\mu_i^\top)\big) \\
&= \operatorname{Tr}\big(J_\phi\Sigmabm_\epsilon J_\phi^\top \Sigmabm_{ij}\big)
+ \mu_i^\top J_\phi\Sigmabm_\epsilon J_\phi^\top \mu_j,
\end{align*}
which implies a compact matrix form
\begin{align}\label{eq:cov_U2}
\mathbb{E}[WJ_\phi\Sigmabm_\epsilon J_\phi^\top W^\top]
= W_{\mathrm{MAP}} J_\phi\Sigmabm_\epsilon J_\phi^\top W_{\mathrm{MAP}}^\top
+ \operatorname{Tr}_D\big[(I_K\otimes (J_\phi\Sigmabm_\epsilon J_\phi^\top))\Sigmabm_W\big].
\end{align}
Here $\operatorname{Tr}_D : \mathbb{R}^{KD\times KD} \to \mathbb{R}^{K\times K}$ denotes the operator that maps a block matrix to the matrix whose $(i,j)$-th entry is the trace of its $D\times D$ block $\Sigma_{ij}$.
Substituting \eqref{eq:cov_U1} and \eqref{eq:cov_U2} in \eqref{eq:cov_U} completes the proof.


\end{proof}

\subsection{Proof of 
~\Cref{prop:lb_kl}}\label{proof:proposition4.2}

The proof consists of two parts: (i) demonstrating that the LB surrogate $\pibm$ is a Dirichlet distribution
$\mathrm{Dir}_p(\pbm|\alphabm)$ with parameters $\alphabm$ matching the first two moments of $\pi$ up to $\mathcal{O}(\|\Sigma_z\|_F)$ and $\Ocal(\|\Sigma_z\|_F^{\frac{3}{2}})$ respectively; (ii) ; and (ii) establishing an upper bound on the KL divergence between $\pibm$ and $\tilde{\pibm}$.

Part (i). 
Let the first-order delta-method expansion of $\softmax$ around $\mubm_z$ be
$\pibm(\zbm)=\softmax(\mubm_z)+J_{\softmax}(\mubm_z)(\zbm-\mubm_z)+r(\zbm)$, where $J_{\softmax}(\cdot)$ is the Jacobian of $\softmax$ and the remainder $r(\zbm)$ is defined as $r(\zbm) = \int_0^1 (1-t) \, H_{\softmax}\big(\mubm_z + t(\zbm - \mubm_z)\big) \big[(\zbm - \mubm_z), (\zbm - \mubm_z)\big] \, dt$.
Here, $H_{\softmax}(\xbm)$ is the Hessian tensor of $\softmax$ evaluated at $x$, 
and the notation $H_{\softmax}(\xbm)[v,v]$ denotes the quadratic form $H_{\softmax}(\xbm)[v,v]_i := \sum_{j,k} \frac{\partial^2 \softmax_i(\xbm)}{\partial x_j \partial x_k} v_j v_k, \forall i=1,\dots,D$.

Under assumption (iii), the Hessian is uniformly bounded in a neighborhood of $\mubm_z$. Consequently, there exists a constant $M>0$ such that $\|H_{\softmax}(\xbm)\|_2\leq M,\forall\xbm\in B(\mubm_z)$ for some $M>0$. It follows that the remainder term satisfies $\|r(\zbm)\|_2\leq \frac{M}{2}\|\zbm-\mubm_z\|_2^2$ where the factor $\frac{1}{2}$ comes from the integral $\int_0^1(1-t)dt=\frac{1}{2}$. Since $\zbm\sim\Ncal(\mubm_z,\Sigmabm_z)$, we have that $\Ebb\|\zbm-\mubm_z\|_2^2=\operatorname{Tr}(\Sigmabm_z)\leq \sqrt{D}\|\Sigmabm_z\|_F$. Therefore, $\Ebb\|r(\zbm)\|_2\leq \frac{M}{2}\Ebb\|\zbm-\mubm_z\|_2^2\leq \frac{M\sqrt{D}}{2}\|\Sigmabm_z\|_F$, which implies that 
\begin{align}\label{eq:e_pi_upper}
    \|\Ebb[\pibm]-\softmax(\mubm_z)\|_2\leq \frac{M\sqrt{D}}{2}\|\Sigmabm_z\|_F.
\end{align}
Define $\delta_\pibm:=\pibm-\Ebb[\pibm]=J_{\softmax}(\mubm_z)(\zbm-\mubm_z)+\big(r(\zbm)-\Ebb[r(\zbm)] \big)$ and let $\tilde r(\zbm):=r(\zbm)-\Ebb[r(\zbm)], J:=J_{\softmax}(\mubm_z)$.
Then the covariance of $\pibm$ can be decomposed as
\begin{align}\label{eq:cov_pi}
    \Cov(\pibm) & = \Ebb[\delta_\pibm\delta_\pibm^\top] = \Ebb[ J(\zbm-\mubm_z)(\zbm-\mubm_z)^\top J^\top ] + \Ebb[ \tilde{r}(\zbm)\tilde{r}(\zbm)^\top]+2\Ebb[J(\zbm-\mubm_z)\tilde{r}(\zbm)^\top] \nonumber\\
    & = J\Sigmabm_z J^\top+\Ebb[ \tilde{r}(\zbm)\tilde{r}(\zbm)^\top]+2\Ebb[J(\zbm-\mubm_z)\tilde{r}(\zbm)^\top].
\end{align}
We now bound the second and the third terms in \eqref{eq:cov_pi}. First, by triangle inequality and the bound on $r(\zbm)$ becomes
\begin{align}\label{eq:r_tilde_upper}
    \|\Ebb[ \tilde{r}(\zbm)\tilde{r}(\zbm)^\top]\|_2 & \leq \Ebb\|\tilde{r}(\zbm)\|^2_2 \overset{\text{(I)}}{\leq} \Ebb \left[\left( \frac{M}{2}\|\zbm-\mubm_z\|_2^2+\frac{M\sqrt{D}}{2}\|\Sigmabm_z\|_F\right)^2\right] \nonumber\\
    & = \frac{M^2}{4} \Ebb\|\zbm-\mubm_z\|^4_2+\frac{M^2\sqrt{D}}{2}\|\Sigmabm_z\|_F\Ebb\|\zbm-\mubm_z\|_2^2+\frac{M^2D}{4}\|\Sigmabm_z\|_F^2 \nonumber\\
    & \overset{\text{(II)}}{\leq}  \frac{M^2}{4}\Big((\operatorname{Tr}(\Sigmabm_z))^2+2\|\Sigmabm_z\|_F^2\Big) +\frac{3M^2D}{4}\|\Sigmabm_z\|_F^2\leq \frac{M^2(D^2+3D+2)}{4}\|\Sigmabm_z\|_F^2
\end{align}
where (I) follows from the bound $\|\tilde{r}(\zbm)\|_2\leq \|r(\zbm)\|_2+\|\Ebb[r(\zbm)]\|_2\leq \frac{M}{2}\|\zbm-\mubm_z\|_2^2+\Ebb\|r(\zbm)\|_2\leq \frac{M}{2}\|\zbm-\mubm_z\|_2^2+\frac{M\sqrt{D}}{2}\|\Sigmabm_z\|_F$ and (II) holds because $\zbm - \mubm_z \sim \Ncal(0,\Sigmabm_z)$ implies $\Ebb\|\zbm-\mubm_z\|^4_2=(\operatorname{Tr}(\Sigmabm_z))^2+2\|\Sigmabm_z\|_F^2$. Indeed, letting $\xbm = \zbm - \mubm_z \sim \Ncal(0,\Sigmabm_z)$, we have $\Ebb\|\xbm\|^4_2=\sum_{i,j}\Ebb[x_i^2x_j^2]=\sum_{i,j}\left([\Sigmabm_z]_{ii}[\Sigmabm_z]_{jj}+2[\Sigmabm_z]_{ij}^2\right)$, where the last equality follows from Isserlis’ theorem. Therefore, $\Ebb\|\xbm\|^4_2=(\operatorname{Tr}(\Sigmabm_z))^2+2\|\Sigmabm_z\|_F^2$.

Next, for the third term of \eqref{eq:cov_pi}, using Cauchy–Schwarz inequality, we have that
\begin{align}\label{eq:J_r_tilde_upper}
    \|\Ebb[J(\zbm-\mubm_z)\tilde{r}(\zbm)^\top]\|_2 & \leq \|J\|_2\Ebb[\|\zbm-\mubm_z\|_2\|\tilde{r}(\zbm)\|_2]\leq \|J\|_2\sqrt{\Ebb\|\zbm-\mubm_z\|_2^2}\sqrt{\Ebb\|\tilde{r}(\zbm)\|_2^2} \nonumber\\
    & \leq \|J\|_2\sqrt[4]{D}\|\Sigmabm_z\|_F^{\frac{1}{2}}\cdot \sqrt{\frac{M^2(D^2+3D+2)}{4}\|\Sigmabm_z\|_F^2} \nonumber\\
    & = \frac{M(D^2+3D+2)^{\frac{1}{2}}D^{\frac{1}{4}}}{2}\|J\|_2\cdot\|\Sigmabm_z\|_F^{\frac{3}{2}},
\end{align}
where the last inequality follows from \eqref{eq:r_tilde_upper}. Combining \eqref{eq:cov_pi}--\eqref{eq:J_r_tilde_upper} yields 
\begin{align}\label{eq:cov_pi_upper}
    \|\Cov(\pibm)-J\Sigmabm_z J^\top\|_2 \leq M(D^2+3D+2)^{\frac{1}{2}} D^{\frac{1}{4}}\|J\|_2\cdot\|\Sigmabm_z\|_F^{\frac{3}{2}}+ \frac{M^2(D^2+3D+2)}{4}\|\Sigmabm_z\|_F^2
\end{align}

We choose $\alphabm$ of the Dirichlet distribution $\mathrm{Dir}_p(\pbm|\alphabm)$ such that its mean and covariance are equal to 
\begin{align}\label{eq:alpha_set}
    [\softmax(\mu_z)]_i=\frac{\alpha_i}{\alpha_0}, \quad [J\Sigmabm_zJ^\top]_{ij}=\frac{\alpha_i(\alpha_0-\alpha_i)}{\alpha_0^2(\alpha_0+1)}\,\mathbf{1}_{i=j}
-\frac{\alpha_i\alpha_j}{\alpha_0^2(\alpha_0+1)}\,\mathbf{1}_{i\neq j},
\end{align}
where $\alpha_0=\sum_i\alpha_i$. From \eqref{eq:e_pi_upper} and \eqref{eq:cov_pi_upper}, we have that 
\begin{align*}
\|\Ebb[\pibm]-\softmax(\mubm_z)\|_2=\Ocal(\|\Sigmabm_z\|_F), \quad \|\Cov(\pibm)-J\Sigmabm_z J^\top\|_2=\Ocal(\|\Sigmabm_z\|_F^{\frac{3}{2}}).
\end{align*}

Part (ii). 
Under assumptions (ii) and (iii), the distributions $P_\pibm$ and $P_{\tilde{\pibm}}$ admit 
twice-differentiable densities with bounded curvature in a neighborhood of 
$\Ebb[\pi]$ inside the simplex $\Delta^{K-1}$. Let $\mubm_\pi := \Ebb[\pibm]=\softmax(\mubm_z)$. 
By a second-order Taylor expansion of the log-density around $\mubm_\pi$, 
for any $\xbm$ near $\mubm_\pi$ we have
\begin{align}\label{eq:log-p-exp}
    \log p_\pibm(\xbm) - \log p_{\tilde{\pibm}}(\xbm) 
= g^\top (\xbm - \mubm_\pi) + \frac{1}{2} (\xbm - \mubm_\pi)^\top H (\xbm - \mubm_\pi) + R(\xbm),
\end{align}
where $g:=\nabla_\xbm(\log p_\pibm-\log_{p_{\tilde{\pibm}}})|_{\xbm=\mubm_z} \in \mathbb{R}^K$ and $H:=\nabla^2_\xbm(\log p_\pibm-\log_{p_{\tilde{\pibm}}})|_{\xbm=\mubm_z} \in \mathbb{R}^{K \times K}$ are bounded, 
and the remainder satisfies $|R(\xbm)| = \mathcal{O}(\|\xbm - \mubm_\pi\|_2^3)$. 
This is precisely the \emph{LAN-style expansion}, which formalizes the idea that the log-likelihood ratio is quadratic in small deviations from the mean, with controlled higher-order terms \cite{cam1960locally, van2000asymptotic}. For $X\sim P_\pibm$, the KL divergence can then be written as
\begin{align*}
    \mathrm{KL}(P_\pibm \| P_{\tilde{\pibm}}) = \Ebb[\log p_\pibm(X) - \log p_{\tilde{\pibm}}(X)].
\end{align*}
Substituting \eqref{eq:log-p-exp} into $\mathrm{KL}(P_\pibm \| P_{\tilde{\pibm}})$ gives
\begin{align*}
    \mathrm{KL}(P_\pibm \| P_{\tilde{\pibm}}) & = \Ebb[g^\top (X - \mubm_\pi)] + \frac{1}{2} \Ebb[(X - \mubm_\pi)^\top H (X - \mu_\pi)] + \Ebb[R(X)],
\end{align*}
where the first term vanishes since $\Ebb[X - \mu_\pi] = 0$. Let $\Delta_{\mathrm{mom}} = \|\mathbb{E}[\pibm]-\mathbb{E}[\tilde{\pibm}]\|_2 + \|\mathrm{Cov}(\pibm)-\mathrm{Cov}(\tilde{\pibm})\|_F$ be the moment discrepancy that measures the difference between the first two moments (mean and covariance) of $\pibm$ and $\tilde\pibm$. The upper bound of the KL divergence becomes
\begin{align*}
    \mathrm{KL}(P_\pibm \| P_{\tilde{\pibm}}) & \overset{\text{(III)}}{\leq} \frac{1}{2}\Big|\Ebb[(X - \mubm_\pi)^\top H (X - \mubm_\pi)]\Big|+|\Ebb[R(X)]| \leq \frac{1}{2}\|H\|_2\Ebb\|X-\mubm_\pi\|^2_2+C_1\Ebb[\|X-\mubm_z\|_2^3] \\
    & \overset{\text{(IV)}}{\leq}  \frac{1}{2}\|H\|_2\Delta_{\mathrm{mom}}+C_1\left( \Ebb[\|X-\mubm_z\|_2^2]\right)^{\frac{3}{2}} \leq \frac{1}{2}\|H\|_2\Delta_{\mathrm{mom}}+C_1(\Delta_{\mathrm{mom}})^{\frac{3}{2}},
\end{align*}
where (III) is true for some $C_1>0$ since $|R(\xbm)| = \mathcal{O}(\|\xbm - \mubm_\pi\|_2^3)$ and (IV) holds by using Jensen's inequality. With the choice of $\alphabm$ given in \eqref{eq:alpha_set}, we have $\tilde{\pibm}=\softmax(\mubm_z)$ and $\Cov(\tilde{\pibm})=J\Sigmabm_z J^\top$, which leads to $\Delta_{\mathrm{mom}}=\frac{M\sqrt{D}}{2}\|\Sigmabm_z\|_F+M(D^2+3D+2)^{\frac{1}{2}} D^{\frac{1}{4}}\|J\|_2\cdot\|\Sigmabm_z\|_F^{\frac{3}{2}}+ \frac{M^2(D^2+3D+2)}{4}\|\Sigmabm_z\|_F^2$ from \eqref{eq:e_pi_upper} and \eqref{eq:cov_pi_upper}. Plugging it into the inequality above, we conclude that 
\begin{align*}
    \mathrm{KL}(P_\pibm\|P_{\tilde{\pibm}})\leq \frac{M\sqrt{D}\|H\|_2}{2}\|\Sigmabm_z\|_F+\mathcal{O}(\|\Sigmabm_z\|_F^{\frac{3}{2}})=C(\mubm_z)\|\Sigmabm_z\|_F+\mathcal{O}(\|\Sigmabm_z\|_F^{\frac{3}{2}}),
\end{align*}
where $C(\mubm_z):=\frac{M\sqrt{D}\|H\|_2}{2}=\frac{1}{2}M\sqrt{D}\|\nabla^2_\xbm(\log p_\pibm-\log_{p_{\tilde{\pibm}}})|_{\xbm=\mubm_z}\|_2>0$ is a constant independent of $\Sigmabm_z$.

\subsection{LBS Certification under $\ell_\infty$ Perturbations}
\label{l_infinity}
\begin{figure}[htbp]
    \centering
    \includegraphics[width=0.8\textwidth]{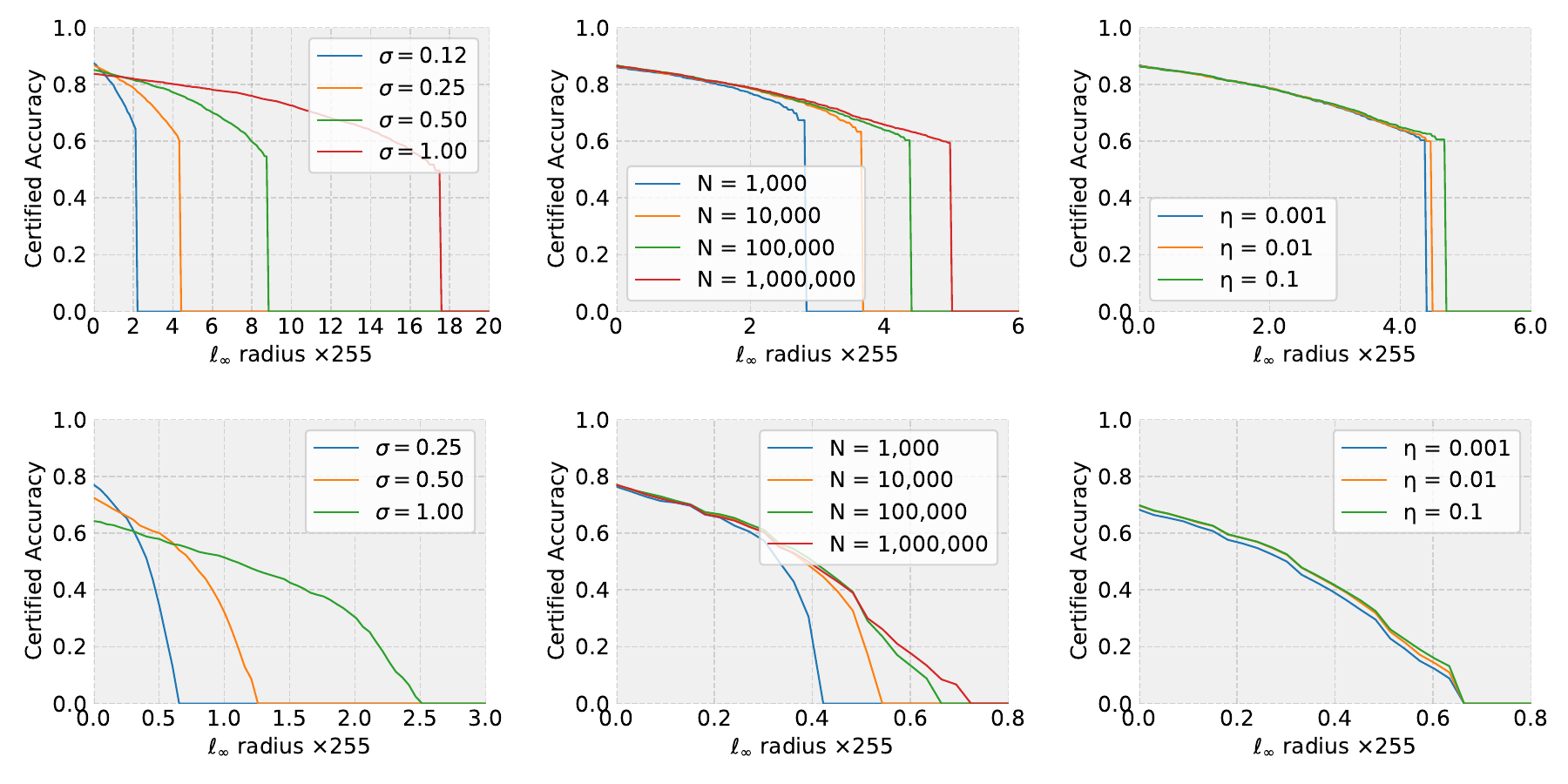}
    \caption{Certified accuracy of LBS on CIFAR-10 (top row) and ImageNet (bottom row). Left: with the different perturbation levels $\sigma$ and $N$ = 10,0000, $\eta$ =0.001. Middle: with the different number of samples $N$ and $\sigma$ = 0.25, $\eta$ =0.001. Right: with the different confidence levels $\eta$ and $N$ =100,000, $\sigma$ = 0.25. }
    \label{fig:LBS_l_infinity}
\end{figure}

We evaluate the robustness of LBS under an $\ell_\infty$ adversarial threat model.
Given a lower bound $\underline{p_A}$ on the smoothed class probability, the certified $\ell_\infty$ radius is obtained by mapping the probability bound to a robustness radius via the level-set differential formulation introduced by~\citet{yang2020randomized}.
As shown in Figure~\ref{fig:LBS_l_infinity}, we examine the trade-off between certified accuracy and radius for LBS under different smoothing noise levels $\sigma$, MC sample sizes $N$, and confidence levels $\eta$.
For visualization, the certified $\ell_\infty$ radius is reported in pixel scale by multiplying the normalized radius by $255$, following the convention of~\citet{yang2020randomized}.

Across all settings, LBS exhibits consistent certification behavior under $\ell_\infty$ perturbations.
Despite the inherently small certified radii imposed by theoretical limitations under $\ell_\infty$ threat models, LBS provides stable and well-calibrated robustness guarantees within this constrained regime.


\subsection{Effect of Different Model Architectures}
\label{model_architectures}

RS is inherently model-agnostic and can, in principle, be applied to arbitrary base classifiers~\citep{cohen2019certified,jia2019certified, yang2020randomized, li2023sok, scholten2023hierarchical, wang2024drf}. 
To examine whether this property carries over to LBS in practice, we evaluate LBS under different network architectures, 
including DenseNet-161 and EfficientNet-B0, on CIFAR-10.

\begin{wrapfigure}[8]{r}{0.48\textwidth}
    \vspace{-4mm}   
    \centering
    \resizebox{0.48\textwidth}{!}{
    \begin{tabular}{lcccccc}
        \toprule
        $\ell_1$ radius $\rightarrow$ & 0.5 & 1.0 & 1.5 & 2.0 & 2.5 & 3.0 \\
        \midrule
        DenseNet-161      & 88.6 & 86.5 & 84.9 & 83.7 & 82.7 & 81.8 \\
        \midrule
        EfficientNet-B0  & 87.9 & 85.7 & 83.7 & 82.3 & 81.2 & 80.2 \\
        \bottomrule
    \end{tabular}}
    \captionof{table}{Certified accuracy attained by LBS across architectures under different $\ell_1$ radii (Uniform noise level $\lambda$), $N=100{,}000$, and $\eta=0.001$ on CIFAR-10.}
    \label{tab:diff_archi}
    \vspace{-4mm}
\end{wrapfigure}

Figure~\ref{fig:LBS_diff_archi} and Table~\ref{tab:diff_archi} show the certified accuracy of LBS under both $\ell_2$ and $\ell_\infty$ threat models across a range of smoothing parameters $\sigma$ and $\ell_1$ radii.
Across both architectures, LBS exhibits consistent certification behavior, with similar accuracy--radius trade-offs and stable performance trends as the perturbation magnitude increases.
These results indicate that LBS does not rely on architecture-specific properties and preserves the model-agnostic nature of RS, while delivering reliable and comparable certified robustness across diverse network designs.

\begin{figure}[htbp]
    \centering
    \includegraphics[width=0.98\textwidth]{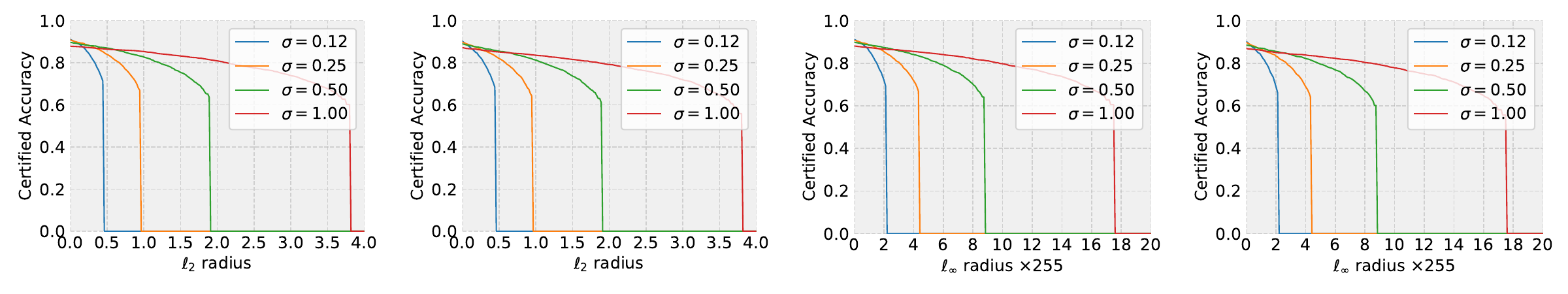}
    \caption{Certified accuracy of LBS across architectures on CIFAR-10 with varying $\sigma$, $N=100{,}000$, $\eta=0.001$.
Left two panels: $\ell_2$ certification; right two panels: $\ell_\infty$ certification (radius $\times 255$).
DenseNet-161 (left column) and EfficientNet-B0 (right column).}
    \label{fig:LBS_diff_archi}
\end{figure}

\subsection{LBS Certification with Noise Augmentation}\label{LBS_with_noise_augmentation}

We further investigate the behavior of LBS when the base classifier is trained with noise augmentation. Specifically, we employ a ResNet-110 classifier trained following the procedure of \citet{cohen2019certified}, using the same noise level for data augmentation during training. Figure~\ref{fig:LBS_noisetrain} reports the certified accuracy–radius trade-offs of LBS under different smoothing noise levels $\sigma$, MC sample sizes $N$, and confidence levels $\eta$. 

The results show that, the certified accuracy attained by LBS exhibits a modest degradation compared to the setting without noise augmentation (see ~\Cref{performance_LBS}). This degradation can be attributed to the interaction between noise-augmented training and the LB approximation.
Noise augmentation during training reshapes the logit distribution, typically reducing class separability and increasing posterior variance.
As a result, the induced logit distribution deviates from the locally concentrated Gaussian regime in which the Laplace-bridge approximation is most accurate, leading to more conservative Dirichlet surrogates and lower certified accuracy.

However,  RS is theoretically applicable to any base classifier and is often framed as a post-hoc defense that does not require modifications to the training procedure.
We found that in practice, achieving competitive certified accuracy with RS and its variants typically relies on noise-augmented training, which couples the certification mechanism to the training process.
This reliance weakens one of the key conceptual advantages of RS as a post-hoc method. Therefore, LBS is explicitly designed to address the limitation of RS and operate without noise-augmented training. 


\begin{figure}[htbp]
    \centering
    \includegraphics[width=0.8\textwidth]{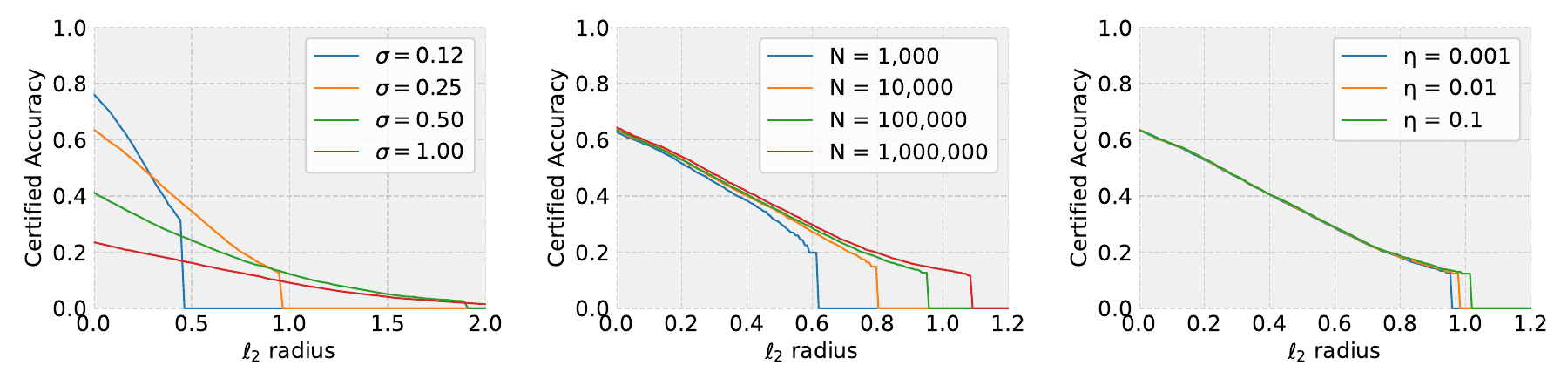}
    \caption{Certified accuracy of LBS on CIFAR-10. Left: varying $\sigma$, with $N=100{,}000$ and $\eta=0.001$. Middle: varying $N$, with $\sigma=0.25$ and $\eta=0.001$. Right: varying $\eta$, with $N=100{,}000$ and $\sigma=0.25$. }
    \label{fig:LBS_noisetrain}
\end{figure}

\subsection{On-device LBS Certification}\label{edge_app}
In Section~\ref{edge}, we conduct experiments on two representative on-device platforms with different computational profiles: an NVIDIA Jetson Orin Nano and a Raspberry~Pi~4. The Jetson Orin Nano runs L4T 36.4.3 / JetPack 6 with Linux 5.15.148-tegra, a 6-core ARM Cortex-A78AE CPU, aarch64 architecture, approximately 7.4 GiB RAM, and PyTorch 2.8.0. The Raspberry~Pi~4 is a Model B Rev 1.5 running Debian 6.12.25+rpt-rpi-v8 with a 4-core ARM Cortex-A72 CPU, aarch64 architecture, approximately 3.7 GiB RAM, and CPU-only PyTorch 2.7.0.

On both platforms, we deploy the ResNet110, SqueezeNet, and ShuffleNet classifiers trained on CIFAR-10 following the protocol of \citet{cohen2019certified}, without noise augmentation. The checkpoints are loaded directly on the device, and the full certification pipeline is executed locally, including model inference, LBS certification, and per-sample certified-radius estimation.

Figure~\ref{fig:LBS_jetson} reports the certified accuracy of LBS (SqueezeNet backbone) on two on-device platforms, with the left columns corresponding to NVIDIA Jetson Orin Nano and the right columns to Raspberry~Pi~4.
Across both devices, increasing the number of samples $N$ improves certified accuracy, with diminishing returns beyond $N{=}10{,}000$, while smaller values of $\eta$ produce more conservative but stable certificates.
These trends are consistent across platforms, indicating stable on-device certification behavior.

Table~\ref{tab:l1_LBS_edge} further reports the certified accuracy of LBS under uniform noise for $\ell_1$ certification on CIFAR-10.
As the noise level $\lambda$ increases, certified accuracy decreases gradually on both Jetson Orin Nano and Raspberry~Pi~4, reflecting the expected robustness–accuracy trade-off.
Importantly, the performance gap between the two devices remains small across all radii, demonstrating that LBS maintains consistent $\ell_1$ certified robustness under uniform noise in on-device settings.

Figure~\ref{fig:LBS_l_infinity_ondevice} evaluates LBS under the $\ell_\infty$ threat model.
Increasing the noise scale $\sigma$ enlarges the certified $\ell_\infty$ radius at the cost of reduced certified accuracy, whereas increasing $N$ yields tighter and more stable certification, again with diminishing gains beyond $N{=}10{,}000$.

Despite differences in compute and memory capacity, Jetson Orin Nano and Raspberry~Pi~4 exhibit similar certification behavior across under $\ell_p$ settings.
The observed trends are consistent with those obtained on server-class hardware and support the applicability of LBS in resource-constrained, on-device environments.

\begin{figure}[htbp]
    \centering
    \includegraphics[width=0.98\textwidth]{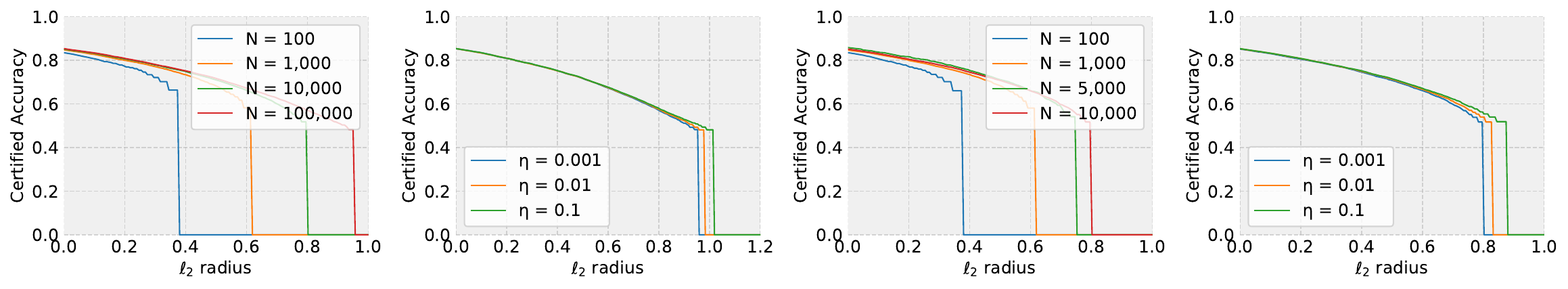}
    \caption{Certified accuracy of LBS on CIFAR-10 on two edge platforms under $\ell_2$ perturbation. Left two: NVIDIA Jetson Orin Nano; right two: Raspberry~Pi~4. For each platform, the left panel varies $N$; the right panel varies $\eta$.}
    \label{fig:LBS_jetson}
\end{figure}

\begin{figure}[htbp]
    \centering
    \includegraphics[width=0.98\textwidth]{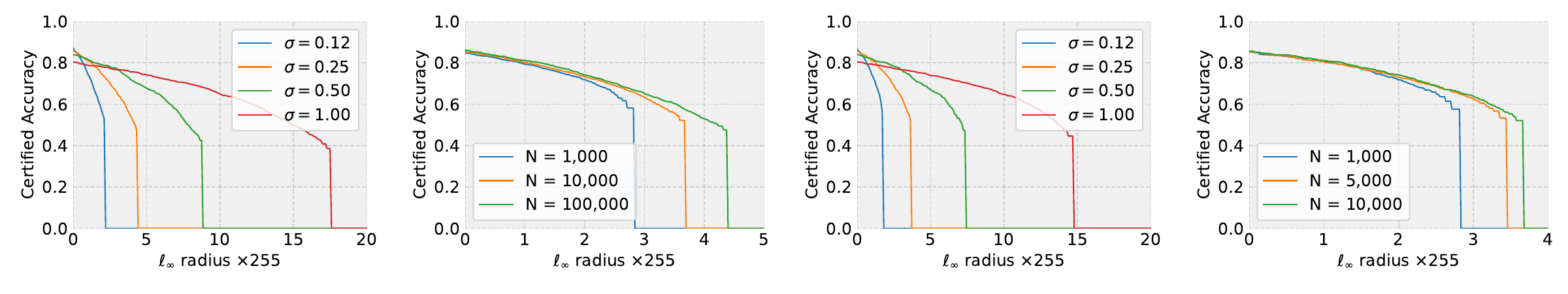}
    \caption{Certified accuracy of LBS on CIFAR-10 on two edge platforms under $\ell_\infty$ perturbation, with radius scaled by $\times 255$. Left two columns: NVIDIA Jetson Orin Nano; right two columns: Raspberry~Pi~4. For each platform, the left panel varies $\sigma$ with fixed $N$ = $100{,}000$ on Jetson and $10{,}000$ on Pi; the right panel varies $N$ with fixed $\sigma$ = $0.25$.}
    \label{fig:LBS_l_infinity_ondevice}
\end{figure}

\begin{table}[h]
\centering
\resizebox{0.48\textwidth}{!}{
\begin{tabular}{lcccccc}
\toprule
$\ell_1$ radius $\rightarrow$ & 0.5 & 1.0 & 1.5 & 2.0 & 2.5 & 3.0 \\
\midrule
Jetson Orin Nano & 83.8 & 80.1 & 79.1& 77.0 & 76.2 & 75.7 \\
\midrule
Raspberry~Pi~4 & 85.0 & 82.7 & 81.0 & 79.3 & 78.3 & 77.0 \\
\bottomrule
\end{tabular}}
\caption{Certified accuracy of LBS under different uniform noise levels $\lambda$ on CIFAR-10. }
\label{tab:l1_LBS_edge}
\end{table}

\subsection{Use of Large Language Models (LLMs)}\label{llms}
We used LLMs for editorial assistance (clarity, grammar, wording, and minor reorganizations). LLMs were not used to generate ideas, design experiments, or manipulate results, or draft related-work claims. The authors take full responsibility for all content.

\end{document}